\theoremstyle{plain}
\newtheorem{theorem}{Theorem}[section]
\newtheorem{proposition}[theorem]{Proposition}
\theoremstyle{definition}
\newtheorem{definition}[theorem]{Definition}
\theoremstyle{remark}
\def\bfL{\mathbf{L}}
\def\bbE{\mathbb{E}}
\def\bbR{\mathbb{R}}
\def\calB{\mathcal{B}}
\def\calD{\mathcal{D}}
\def\calF{\mathcal{F}}
\def\calG{\mathcal{G}}
\def\calH{\mathcal{H}}
\def\calN{\mathcal{N}}
\def\calO{\mathcal{O}}
\def\calX{\mathcal{X}}
\def\calY{\mathcal{Y}}
\def\calZ{\mathcal{Z}}
\DeclarePairedDelimiterX\bigCond[2]{[}{]}{#1 \;\delimsize\vert\; #2}
\newcommand{\oset}[3][0.25ex]{%
  \mathrel{\mathop{#3}\limits^{
    \vbox to#1{\kern-2\ex@
    \hbox{$\scriptstyle#2$}\vss}}}}
\def\condind{{\perp\!\!\!\perp}} %
\def\equdist{\oset{\text{\rm\tiny d}}{=}} %
\newcommand{\argdot}{{\,\vcenter{\hbox{\tiny$\bullet$}}\,}} %
\def\tm{\tilde{m}}
\def\th{\tilde{h}}
\def\tv{\tilde{v}}
\def\tp{\tilde{p}}
\def\dm{\dot{m}}
\def\dh{\dot{h}}
\def\dtm{\dot{\tm}}
\def\dth{\dot{\th}}
\def\Rad{\textrm{Rad}}
\begin{document}

\twocolumn[
\icmltitle{Distinguishing Cause from Effect with Causal Velocity Models}

\icmlsetsymbol{equal}{*}

\begin{icmlauthorlist}
\icmlauthor{Johnny Xi}{ubc}
\icmlauthor{Hugh Dance}{gatsby}
\icmlauthor{Peter Orbanz}{gatsby}
\icmlauthor{Benjamin Bloem-Reddy}{ubc}
\end{icmlauthorlist}

\icmlaffiliation{ubc}{Department of Statistics, University of British Columbia}
\icmlaffiliation{gatsby}{Gatsby Unit, University College London}

\icmlcorrespondingauthor{Johnny Xi}{johnny.xi@stat.ubc.ca}

\icmlkeywords{Machine Learning, ICML}

\vskip 0.3in
]

\makeatletter
\chead{\small\bf\@icmltitlerunning}
\makeatother
\printAffiliationsAndNotice{\icmlEqualContribution} %

\begin{abstract}
Bivariate structural causal models (SCM) are often used to infer causal direction by examining their goodness-of-fit under restricted model classes. In this paper, we describe a parametrization of bivariate SCMs in terms of a \emph{causal velocity} by viewing the cause variable as time in a dynamical system. The velocity implicitly defines counterfactual curves via the solution of initial value problems where the observation specifies the initial condition. Using tools from measure transport, we obtain a unique correspondence between SCMs and the score function of the generated distribution via its causal velocity. Based on this, we derive an objective function that directly regresses the velocity against the score function, the latter of which can be estimated nonparametrically from observational data. We use this to develop a method for bivariate causal discovery that extends beyond known model classes such as additive or location-scale noise, and that requires no assumptions on the noise distributions. When the score is estimated well, the objective is also useful for detecting model non–identifiability and misspecification. We present positive results in simulation and benchmark experiments where many existing methods fail, and perform ablation studies to examine the method's sensitivity to accurate score estimation.

\end{abstract}

\section{Introduction}
\label{sec:intro}

Distinguishing cause from effect from purely observational data is a challenging task. It is generally impossible to distinguish a causal direction $X \to Y$ from an anti-causal direction $Y\to X$ without intervention, as the corresponding causal graphs are Markov equivalent. To make causal discovery possible, one common approach is to make functional assumptions, for example, non-linear additive noise \citep{hoyer2008nonlinear} or location-scale noise \citep{xu2022heterscedastic,strobl2023identifying, immer2023identifiability} on the underlying causal process, and decide on the causal direction based on model fit and/or complexity in both candidate directions. 

Likelihood-based approaches seem natural here, as they can be used both for model estimation and evaluation. However, they require the full specification of both the mechanism and the noise distribution. This is undesirable due to the risks of model misspecification, particularly with respect to the noise distribution, which is not directly involved in causality, but can still lead to incorrect causal inference \citep{schultheiss2023pitfalls}. To avoid that risk, the most prominent example of a functional method is to fit an additive noise model (ANM) using nonparametric regression, which avoids needing to model the noise distribution. Instead of a likelihood-based measure, an independence score \citep[e.g., HSIC,][]{gretton2005measuring} between the estimated residuals and cause variable can be used to determine goodness-of-fit \citep{mooij2009regression, peters2014causal}. This approach has been extended to location-scale models (LSNM) by \citet{strobl2023identifying, immer2023identifiability}. 
Separately, \citet{rolland2022score, montagna2023scalable} derived other goodness-of-fit criteria for ANMs that avoid structural model estimation altogether. Instead, those methods extract the signal directly from estimates of the score function $\nabla \log p(x, y)$ under the assumption of Gaussian noise. Subsequent work used score estimation in conjunction with explicit model estimation in the case of non-Gaussian noise \citep{montagna2023causal}. However, we are unaware of any work on functional causal discovery methods beyond the ANM and LSNM classes, which may still be misspecified in many settings. 

In this work, we develop a new framework that treats bivariate causal models as dynamical systems, and we use it to devise a new estimation procedure based on the score function of the data, applying it to causal discovery. Specifically, we parametrize a causal model via its \emph{causal velocity}, which describes infinitesimal counterfactuals and can be directly recovered from the score in a simulation-free way analogous to the recent literature on flow-based generative modeling \citep{lipman2023flow, albergo2023building}. Assuming that the marginal and joint score functions can be estimated accurately, our proposed method combines the advantages of various state-of-the-art methods in that it is agnostic to the noise distribution and avoids explicit estimation of the functional model, even for non-Gaussian and non-additive cases. Beyond this, the velocity parametrization allows us to specify model classes that extend beyond ANM and LSNM in terms of dependence on the noise, but are still restrictive enough to be useful for causal discovery. 
\begin{figure}
    \centering
    \includegraphics[width=\linewidth]{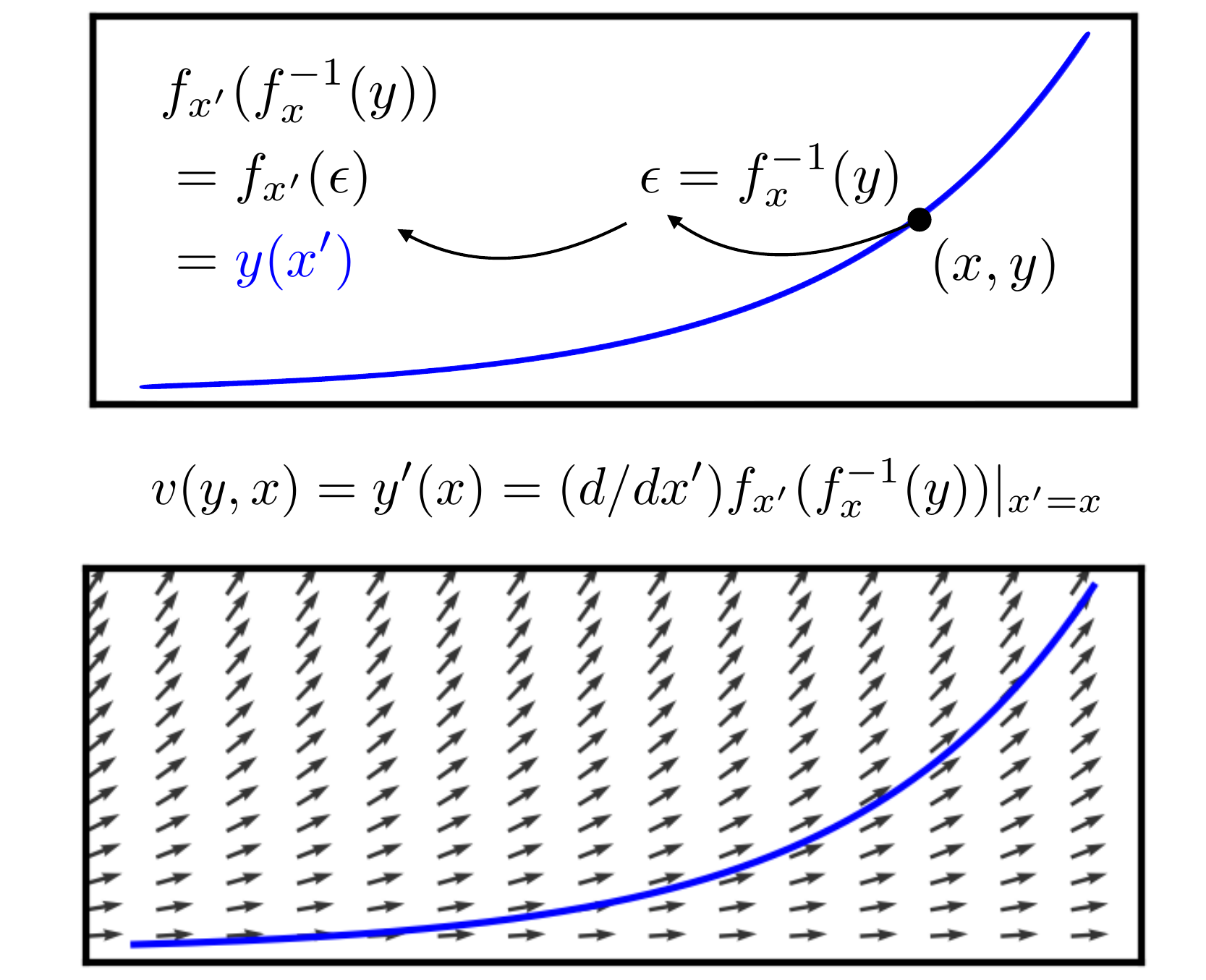}
    \vspace{-15pt}
    \caption{Given a SCM $X \to Y$ and an observation $(x,y)$, the \textcolor{blue}{causal curve} represents the implied counterfactual outcomes had $x$ been $x'$ (top). The derivative of the causal curve, which we call the \textbf{causal velocity} function $v(y,x)$, generates all the causal curves and hence characterizes the counterfactuals of the SCM (bottom). In this paper, we show how the velocity can be estimated directly from data using the score function, without the need to evaluate the mechanism itself nor specify the underlying distributions.}
    \label{fig:conceptual_fig}
    \vspace{-20pt}
\end{figure}

\textbf{Contributions\ \ } We present a new perspective on invertible SCMs as dynamical systems through their implied velocity functions. In particular, counterfactual prediction becomes the solution to an initial value problem in which the factual observation is the initial condition. This allows us to specify novel causal models. It also allows us to use established ideas from measure transport to connect the causal velocity to the score functions of the data distribution. We use this to devise a novel simulation and likelihood-free estimation procedure for bivariate SCMs and a related goodness-of-fit criterion for bivariate causal discovery. 

To support the methodological contributions, we also examine the causal direction identifiability problem from the dynamical perspective, where we obtain a general result that specializes to those existing in the literature. We also establish statistical consistency of our estimation procedure, and show that the rate of convergence is dictated by the rate at which the score estimators converge. 
We show through synthetic and benchmark experiments that the added model flexibility is beneficial in situations where ANM and LSNM models can fail, but simple parametric velocity families recover the correct causal direction.\footnote{Code supporting our experiments can be found on Github at \href{https://github.com/xijohnny/causal-velocity}{https://github.com/xijohnny/causal-velocity}.}

The appealing properties of our method depend on an accurate estimate of the score function at the data. The nonparametric score estimators that we use have known consistency properties and convergence rates \citep{zhou2020nonparametric}, which justifies the method asymptotically. However, some distributions, including those seen in common benchmarks, may not have favourable finite-sample behaviour even in the low-dimensional regime. We describe and probe failure modes of our method by way of poor score estimation. %

\textbf{Outline\ \ } In \cref{sec:background} we cover the necessary background on bivariate causal discovery and on dynamical systems. \cref{sec:dynamical} describes our framework of viewing SCMs as dynamical systems and the velocity parametrization. In \cref{sec:gof}, we use the continuity equation to derive an identity that we use to recover the causal velocity from the score function. \cref{sec:velocity:discovery} applies this to causal discovery and discusses consistency and identifiability. Finally, \cref{sec:related} and \ref{sec:experiments} describe related work and experimental evaluations.

\section{Background}
\label{sec:background}

\begin{table*}[t]
\centering
\vspace{-6pt}
\caption{Summary of existing and novel SCMs and their velocities. The velocity perspective allows for specification of novel model classes and can also be used as an alternative means of estimating existing model classes.}\label{tab:examples}
\resizebox{\textwidth}{!}{%
    \begin{tabular}{c|cccc}
    \toprule
     \shortstack{\textbf{Model} \vspace{1.0em}} &  \shortstack{\vspace{-0.4em}\textbf{SCM} \\ \\ $ f(X, \epsilon_y)$} & \shortstack{\vspace{-0.4em}\textbf{Counterfactual} \\ \\ $f_{x'}(f_x^{-1}(y))$} & \shortstack{\vspace{-0.6em}\textbf{Velocity} \\ \\ $(d/d{x'})f_{x'}(f_x^{-1}(y)) |_{x' = x} $} & \shortstack{\textbf{Parameters} \vspace{1.0em}} \\ \midrule
    ANM & $m(X) + \epsilon_y$& $y + m(x') - m(x)$ & $\dot m(x)$ & $\dot m: \bbR \to \bbR$  \\ 
    PNL & $g(m(X) + \epsilon_y)$ & $g(g^{-1}(y + m(x') - m(x))$ & $\dot g(g^{-1}(y))\dot m(x)$ & $\dot m, \dot g: \bbR \to \bbR$\\ 
    LSNM & $m(X) + e^{h(X)} \epsilon_y$ & $m(x') + e^{h(x')-h(x)}(y - m(x))$ & $\dot m (x) + \dot h(x)(y - m(x))$ & $\dot m, \dot h: \bbR \to \bbR$  \\ 
    Basis & $\epsilon_y + \smallint_{x_0}^{x} a^\top \Phi(y(u), u) du$ & $y + \smallint_{x}^{x'} a^\top \Phi(y(u),u)du$ & $a^\top \Phi(y,x)$ & $a \in \bbR^{K}$ \\ 
    Black box & $\epsilon_y$ + $\smallint_{x_0}^{x} v(y(u), u) du$ & $y +\smallint_{x}^{x'} v(y(u), u) du$ & $v(y,x)$ & $v: \bbR^2 \to \bbR$ \\ 
    \bottomrule
    \end{tabular}
    }
    \vspace{-10pt}
\end{table*}

\subsection{Structural Causal Models}

A structural causal model (SCM) for $X$ causing $Y$ is typically defined as the tuple $(f, P_{X}, P_{\epsilon_y})$,
\begin{align} \label{eq:SCM:def}
    Y = f(X, \epsilon_y), \quad X \condind \epsilon_y  \;, 
\end{align}
with $X \sim P_{X}$ and $\epsilon_y\sim P_{\epsilon_y}$. Throughout, we will assume that $f_x(\argdot) := f(x, \argdot)$ is bijective for each $x$, and denote $f^{-1}_x(\argdot)$ as its inverse. The pair $(f_{x}, P_{\epsilon_y})$ represents the conditional distribution of $Y | X = x$, and $P_X$ completes the specification of the joint distribution. The representation of a conditional distribution as a function of the conditioning variable and independent noise is always possible \citep[][Lem.\ 4.22]{Kallenberg_2021}, and it can be chosen to be bijective in the noise if, for each $x$, the conditional distribution of $Y|X=x$ is atomless.

Bijectivity enables identification of counterfactuals. In particular, let $(x, y(x)) = (x, f_x(\epsilon_y))$ denote a realization of the SCM. Assuming no hidden variables, the counterfactual realization ``had $x$ been $x'$'' is computed by the abduction-action-prediction procedure \citep{pearl2009causality} as $(x', f_{x'}\circ f^{-1}_x(y(x))) = (x', f_{x'}(x', \epsilon_y)):= (x', y(x'))$. Without bijectivity,  the conditional distribution of $\epsilon_y | x, y(x)$ must be estimated. A bijective SCM point-identifies the counterfactual directly as a deterministic transformation:
\begin{align}
    y(x) \mapsto y(x') = f_{x'} \circ f_{x}^{-1} (y(x)).\label{eq:counterfactual_transformation}
\end{align}
\citet{nasr2023counterfactual} established the identifiability in various settings of  bijective SCMs, with $y(x')$ given by \eqref{eq:counterfactual_transformation}. Applying this transformation for all possible $x'$ yields a curve of counterfactual outcomes; see \cref{fig:conceptual_fig}. %

\subsection{Functional Bivariate Causal Discovery} 
\label{sec:bg:discovery}

Causal discovery aims to determine, from pairs of observed data $(X,Y)$, whether $X$ or $Y$ is the cause. This problem is underdetermined when using observational data, due to the universality of the SCM representation of conditional distributions. 
One common approach, based on the argument that the data-generating process in the causal direction is simpler \citep{mooij2016distinguishing}, is to restrict the model class. It is hoped that in doing so, the model will fit the causal direction but exclude the anti-causal direction \citep{goudet2019learning}. 

Because the distribution of the noise variables is not related to causality \emph{per se}, most attention is paid to restricting the mechanism. 
A well-established framework that does not require constraining or modelling the noise distribution is the regression and subsequent independence test (RESIT) approach \citep{peters2014causal}, which quantifies the goodness-of-fit of a causal model by the independence of the cause variable to the residuals obtained from the model fit. Originally proposed for nonparametric regression in ANMs, the RESIT approach has also been extended to LSNMs \citep{immer2023identifiability, strobl2023identifying}, PNL models by post-processing model estimates with non-linear ICA \citep{zhang2009identifiability}, and rank-based approaches \citep{keropyan2023rank}. A key takeaway from this literature is that we should aim for methods that allow for flexible---though not unboundedly so---model classes that can be estimated without specifying the noise distribution.

\subsection{Flows and Differential Equations}
\label{sec:bg:flows}

The subsequent sections will expose connections between SCMs and dynamical systems generated by differential equations, so we briefly review some relevant results about the latter here. Our exposition largely follows \citet[][Appendix B]{Arnold1998} and \citet[Ch.\ 4.1.2]{Santambrogio2015}. 
Consider the ordinary differential equation (ODE) in $\bbR^d$, 
\begin{align} \label{eq:basic:ode}
    \frac{d y(t)}{d t} = v(y(t),t) \;, \quad y(t) \in \bbR^d,\  t \in \bbR \;.
\end{align}
Under regularity conditions, ODEs are known to be in correspondence with \emph{two-parameter flows}, defined as follows.
\begin{definition}%
    \label{def:two:p:flow}
    A two-parameter flow (or flow) $\varphi_{s,t}$ on $\bbR^d$ is a continuous mapping 
    \begin{align*}
        (s,t,y) \mapsto \varphi_{s,t}(y), \quad s,t, y \in \bbR \times \bbR \times \bbR^d,
    \end{align*}
    that satisfies the flow properties
    \begin{enumerate}[itemsep=0pt,topsep=0pt]
        \item $\varphi_{t,t}(y) = \text{id}(y) = y$, for all $t \in \bbR,\ y \in \bbR^d$.
        \item $\varphi_{s,t} \circ \varphi_{t,u} = \varphi_{s,u}$ for all $s,t,u \in \bbR$.
    \end{enumerate}
\end{definition}

We note that $\varphi_{s,t}^{-1} = \varphi_{t,s}$. 
The ODE \eqref{eq:basic:ode} is said to \emph{generate} the flow $\varphi_{s,t}$ if
\begin{align} \label{eq:ode:flow:def}
    \varphi_{s,t}(y) = y + \int_{s}^t v(\varphi_{s,u}(y), u) \ du \;, \quad t \in \bbR \;.
\end{align}
If $\varphi_{s,t}$ is differentiable in $t$ and satisfies
\begin{align} \label{eq:classical:sol}
    \frac{d}{dt} \varphi_{s,t}(y) = v(\varphi_{s,t}(y),t) \quad \text{and} \quad \varphi_{s,s}(y) = y\;,
\end{align}
then $\varphi_{s,t}$ is said to be a (classical) \emph{solution} of \eqref{eq:basic:ode}. Conversely, a flow can be used to define an ODE. If $t\mapsto \varphi_{s,t}(y)$ is differentiable in $t$ at $t = s$ for all $y$, then
\begin{align} \label{eq:ode:from:flow}
    v(y,s) := \frac{d}{dt}\varphi_{s,t}(y)\big|_{t=s}
\end{align}
yields an ODE to which $\varphi_{s,t}$ is a solution, and $v$ is known as the \emph{velocity}. 
Under regularity conditions (\cref{appx:flows}), velocities and flows are in one-to-one correspondence.

Now suppose that an initial condition for the ODE \eqref{eq:basic:ode} is chosen randomly as $Y(s) \sim p_s$, where $p_s$ is the density of a probability distribution on $\bbR^d$. Letting $Y(s)$ evolve according to the ODE yields $Y(t) = \varphi_{s,t}(Y(s))$. Viewing the flow as a measure transport, the density of $Y(t)$ is $p_t = (\varphi_{s,t})_* p_s$. It can be shown that the family of densities $(p_t)_{t \in \bbR}$ solves the PDE (known as the \emph{continuity equation})
\begin{align} \label{eq:continuity:eqn}
    \partial_t p_t + \nabla_y \cdot (p_t v_t) = 0 \;,
\end{align}
with $v_t := v(\argdot,t)$ and $\partial_t = \partial/\partial t$. 
It turns out that the solution is unique up to (Lebesgue) almost everywhere equivalence. (See \cref{appx:flows} for a formal statement.) 
Note that if $p_t$ is strictly positive then \eqref{eq:continuity:eqn} is equivalent to
\begin{align} \label{eq:log:continuity}
    \partial_t \log p_t = - \nabla \cdot v_t - v_t \cdot \nabla_y \log p_t \;.
\end{align}

\section{SCMs as Dynamical Systems} \label{sec:dynamical}

Generally, flows describe the time evolution of dynamical systems: the usual statement of \cref{def:two:p:flow} yields the evolution of $y$ between time points, and causality is associated with the forward flow of time. Here, we take a different perspective, viewing counterfactual transformation as a flow in which the role of time is played by a cause variable $x$. Whilst a similar connection has been made in previous work \citep{dance2024causal}, by specializing to the bivariate real valued case and viewing the cause explicitly as time, we are able to make a direct connection to ODEs. This enables us to derive a novel velocity-based parametrization of SCMs and simulation-free learning procedure. For the remainder of the paper, we work in the setting of $X \in \bbR$ and $Y \in \bbR$.

\begin{definition}
    Let $X$ cause $Y$, with bijective SCM \eqref{eq:SCM:def}. 
    The flow generated by the SCM, or \emph{SCM flow}, is defined as
    \begin{align}
    (x, x', y) \mapsto f_{x'}(f_x^{-1}(y)) =: \varphi_{x,x'}(y) \;. \label{eq:scm_flow}
    \end{align}
\end{definition}

It is easy to see that \cref{eq:scm_flow} satisfies the axioms of a flow \cref{def:two:p:flow}. In particular, for an observation $(x, y)$, the \emph{causal curve} (\cref{fig:conceptual_fig}) is the function of $x'$ defined by
\begin{align}
    x' \mapsto y(x') = \varphi_{x,x'}(y) \;.
\end{align}
Causal curves represent the counterfactual outcomes under the SCM from a factual observation $(x, y)$. Generally, each observation specifies a different causal curve. 
Based on the dynamical systems perspective reviewed in \cref{sec:bg:flows}, $(x,y)$ can be viewed as initial conditions.

In analogy to \eqref{eq:ode:from:flow}, if $(x,x',y)\mapsto \varphi_{x,x'}(y)$ is continuous and $x \mapsto f_x(\epsilon_y)$ is differentiable for all $\epsilon_y$ then the  \emph{causal velocity} of the SCM flow is (see \cref{fig:conceptual_fig}, bottom)
\begin{align} \label{eqn:generator}
    v(y,x) = \frac{d}{dx'} f_{x'}(f_{x}^{-1}(y)) \big|_{x' = x} \;.
\end{align}
This holds for all $(x,y)$ in the support of $P_{X,Y}$. 
The resulting $v$ describes the local behaviour of the flow at $(x,y)$: $v(y,x)$ is the effect of an infinitesimal intervention $x' = x + \delta$ as $\delta \to 0$, which, based on the algebraic properties of the flow, generates the counterfactual curves. Thus, we see that the mechanism of a bijective SCM uniquely determines a flow and its associated velocity. %

Conversely, the causal mechanism of a bijective SCM can be parametrized by a velocity $v$ via its corresponding flow from an arbitrary $x_0 \in \bbR$ acting as the noise state, 
\begin{align} \label{eq:flow:from:v}
    \varphi_{x_0,x}(y) = y + \int_{x_0}^{x} v(\varphi_{x_0,u}(y), u) \ du \;. 
\end{align}
To complete the specification of the SCM, it remains to specify the conditional distribution $p(y\mid x_0)$ at some arbitrary point $x_0$, which plays the role of the noise distribution as follows
\begin{align*}
    f(X,\epsilon_y) = \varphi_{x_0,X}(\epsilon_y) \;, \quad \epsilon_y \sim p(y|x_0) \;.
\end{align*}
Under regularity conditions, the SCM is unique and holds over all of $\bbR^2$. Technically, an SCM in the sense of \cref{sec:background} also requires $P_X$ to fully specify the joint distribution but it plays no role in the causal mechanism or counterfactuals. The following theorem formalizes the relationship between bijective SCMs and dynamical systems. See \cref{appx:flows} for technical details on the regularity conditions.

\begin{theorem} \label{thm:scm:velocity}
    Let $X$ cause $Y$, with $X,Y \in \bbR$. 
    Under regularity conditions, a bijective SCM uniquely determines a velocity-density pair $(v, p(y|x_0))$. Conversely, $(v, p(y|x_0))$ determines a SCM uniquely up to changes in $P_X$.  
\end{theorem}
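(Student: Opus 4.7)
The plan is to prove both directions using the correspondence between ODEs and two-parameter flows reviewed in \cref{sec:bg:flows}, applied with the cause variable $x$ playing the role of time. The main technical work is hidden in the regularity conditions that guarantee existence, uniqueness, and smoothness of ODE solutions; I will assume these are available (classical Picard--Lindelöf type hypotheses on $v$, uniform in $x$ in compact regions, plus differentiability of $x \mapsto f_x(\epsilon_y)$ for the forward direction).

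For the forward direction, I start with a bijective SCM $(f, P_X, P_{\epsilon_y})$ and form the SCM flow $\varphi_{x,x'}(y) = f_{x'}(f_x^{-1}(y))$ from \cref{eq:scm_flow}. The flow axioms in \cref{def:two:p:flow} follow directly from the bijectivity of each $f_x$. Under the assumed differentiability of $f_x$ in $x$, the map $x' \mapsto \varphi_{x,x'}(y)$ is differentiable at $x' = x$, so $v(y,x)$ is well defined by \eqref{eqn:generator}, and it is canonical since both $f_x$ and $f_x^{-1}$ are intrinsic to the SCM (note that $v$ is unchanged under reparametrizations of the noise, since the flow is defined only through the conditional distributions). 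The density $p(y\mid x_0)$ is just the conditional density of $Y$ given $X=x_0$ under the SCM, which is likewise canonical. This yields the desired pair $(v, p(y\mid x_0))$.

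For the converse, I start with a pair $(v, p(y\mid x_0))$ satisfying the regularity conditions. Classical ODE theory guarantees that the initial value problem in \eqref{eq:flow:from:v} has a unique solution $\varphi_{x_0,x}(y)$, and moreover that it extends to a two-parameter flow $\varphi_{x,x'} = \varphi_{x_0,x'} \circ \varphi_{x_0,x}^{-1}$ whose maps $y \mapsto \varphi_{x,x'}(y)$ are diffeomorphisms of $\bbR$. Defining $f_x(\epsilon_y) := \varphi_{x_0,x}(\epsilon_y)$ with $\epsilon_y \sim p(y\mid x_0)$, each $f_x$ is bijective and satisfies $f_x \condind \epsilon_y$, so $(f, P_{\epsilon_y})$ is a valid bijective SCM. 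A direct differentiation of the integral equation \eqref{eq:flow:from:v}, combined with the cocycle identity above, recovers exactly $v$ as the velocity, and $p(y\mid x_0)$ is recovered tautologically from the noise distribution.

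It remains to argue uniqueness up to $P_X$. Suppose $(v, p(y\mid x_0))$ corresponds to two bijective SCMs with mechanisms $f$ and $\tilde f$ and noise distributions $P_{\epsilon_y}$, $P_{\tilde\epsilon_y}$. Both produce the same flow via the ODE (by uniqueness of solutions), hence the same conditional distribution of $Y$ given $X = x$ for every $x$, since $P_{Y\mid X=x} = (\varphi_{x_0,x})_* p(y\mid x_0)$. Thus the joint distribution of $(X,Y)$ agrees on the causal conditional and can only differ through $P_X$. At the representation level the mechanism is pinned down up to the usual noise reparametrizations, which the theorem statement subsumes by identifying SCMs with the same conditional distributions. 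I expect the main obstacle to be verifying that the composed flow remains a diffeomorphism globally (not just locally in $x$), which requires a non-blowup hypothesis on $v$; this is precisely what the appendix regularity conditions are designed to enforce.
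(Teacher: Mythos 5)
Your proposal is correct and follows essentially the same route as the paper's proof: both directions rest on the ODE--flow correspondence (the Arnold/Picard--Lindel\"of machinery of \cref{appx:flows}) with the cause variable playing the role of time, constructing the SCM flow and differentiating at $x'=x$ in one direction and integrating the velocity from $x_0$ in the other. The only cosmetic difference is that the paper pins down the noise parametrization via its regularity condition 2(d) (a unique $x_0$ with $f_{x_0}=\mathrm{id}$, so that $p(y|x_0)$ equals the noise density), whereas you treat $x_0$ as an arbitrary reference point and absorb the resulting noise reparametrization freedom into the equivalence of SCMs with identical conditionals --- the same content, stated differently.
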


The above equivalence shows that we can view bijective SCMs in terms of their underlying velocity functions without loss of generality. See \cref{tab:examples} for the velocity functions associated with familiar classes of SCMs. 

\begin{figure*}
    \centering
\includegraphics[width=\linewidth]{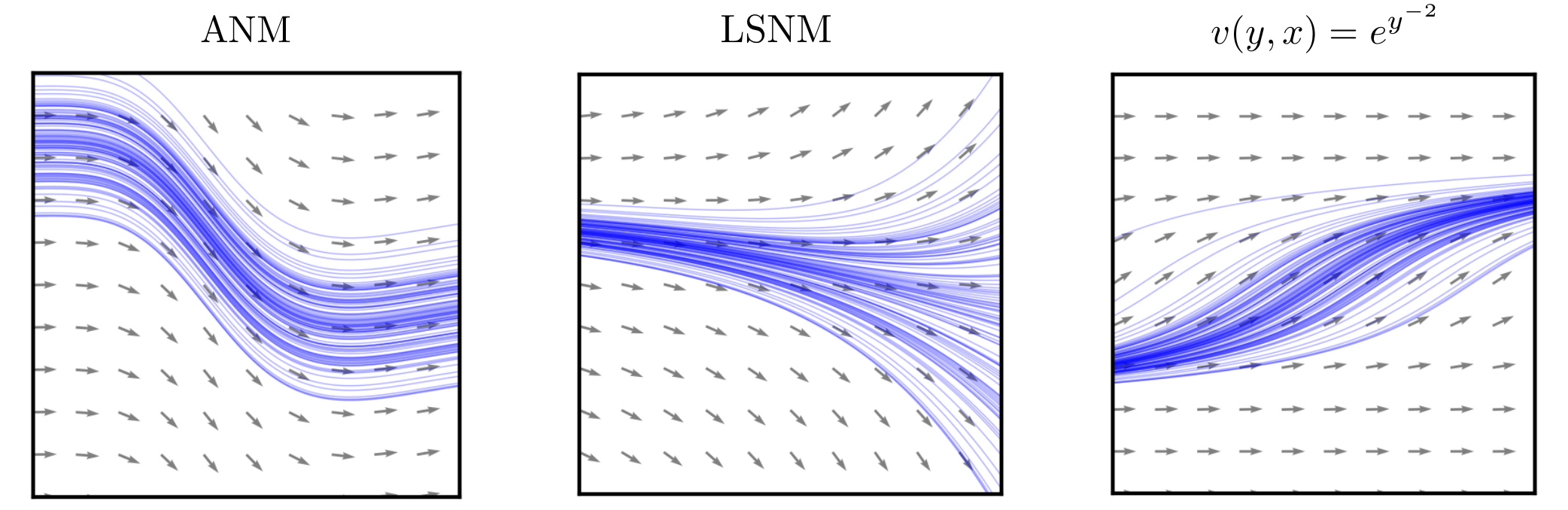}
    \vspace{-20pt}
    \caption{
    Distributions of causal curves obtained 
    by sampling 100 initial conditions from $p(y \mid x = x_0) = \calN(0, 1)$. Note here the ANM and LSNM cross-sections $p(y \mid x)$ are necessarily Gaussian also, no matter how flexible the underlying function classes are. On the other hand, the exponential velocity yields non-Gaussian conditionals at any $x \neq 0$, exhibiting right and left skew as $x < 0$ and $x > 0$, respectively.}
    \label{fig:parametrization}
    \vspace{-10pt}
\end{figure*}

\subsection{Velocity Parametrization of SCMs}

The dynamical perspective allows a causal model to be specified by the velocity and its implied counterfactuals instead of in terms of first (ANM) or second (LSNM) conditional moments, which can be more interpretable. Viewing $p(y\mid x_0)$ as sampling individual baseline measurements from some population, we see that ANMs indicate that the relative effect of an intervention is the same for all individuals, regardless of their measurement value $y$. This manifests graphically as parallel curves (\cref{fig:parametrization}). LSNMs relax this, but implies that the difference between individual trajectories are proportional. Distributionally, ANMs and LSNMs imply that $x$ controls at most the second moment of $p(y \mid x)$. In particular, if $p(y \mid x_0)$ were Gaussian for any $x_0$, then it is also Gaussian for all $x$. On the other hand, the velocity function $e^{y^{-2}}$ implies a more complicated mechanism that can model insensitivity in the tails of $y$, no matter the value of $x$. Distributionally, this also modifies higher order moments of the conditional distribution; see the rightmost panel of \cref{fig:parametrization}.In practice, the velocity model can be specified in an interpretable way via domain knowledge of the underlying process, or with basis functions or neural networks for more flexible models. The velocity parametrization automatically specify bijective SCMs, and as such their counterfactuals are identifiable \citep{nasr2023counterfactual}.

\section{Score Functions and SCM Flows}
\label{sec:gof}

In the previous sections, we established an equivalence between SCMs and an associated pair $(v,p(y|x_0))$. Here, we show that when a joint distribution is generated by an SCM, the velocity leaves an explicit signature on the derivative of the log-density, i.e., the score function. We leverage this observation to derive a goodness-of-fit criterion based on the score, which can then be used for model-fitting and checking directly at the level of the causal velocity. Remarkably, this allows us to check whether data can possibly be generated from an SCM with causal velocity $v$ without ever having to evaluate the mechanism, nor making any assumptions about the underlying distributions beyond differentiable score functions. 

Let $v$ be the velocity of a SCM that generates the conditional density $p(y|x)$, and assume that the joint distribution has full support, with differentiable marginal and joint log-densities. Let $s_x(y|x) := \partial_x \log p(y|x)$ and $s_y(y|x) := \partial_y \log p(y|x)$ denote the partial derivatives of the log conditional density, and similarly $s_x(x,y)$ and $s_x(x)$ the partial derivatives of the log joint and marginal densities, respectively. 
Viewing the cause variable $x$ as time, the $\log$ version of the continuity equation \eqref{eq:log:continuity} yields 
\begin{align} \label{eq:cond:score:continuity}
    s_x(y|x) = - \partial_y v(y, x) - v(y,x) s_y(y|x) \;.
\end{align}
By known results (see \cref{appx:flows}), every solution $p(y|x)$ arises from some initial condition density $p(y|x_0)$ transported by the SCM flow, for fixed but arbitrary $x_0 \in \bbR$. Moreover, that solution is unique. Therefore, \eqref{eq:cond:score:continuity} can be used to characterize when a conditional density $p(y|x)$ could possibly have been generated by a SCM with velocity $v$. For practical purposes, \eqref{eq:cond:score:continuity} can also be stated in terms of marginal and joint scores, which can be estimated from data.

\begin{theorem} \label{thm:id}
    Let $X$ cause $Y$, with $X,Y \in \bbR$. Assume that $P_{X,Y}$ has full support, with differentiable joint, conditional, and marginal log densities. Then
    \begin{align} \label{eq:joint:id}
        s_x(x,y) = -\partial_y v(y,x) - v(y,x) s_y(x,y) + s_x(x)
    \end{align}
    for all $(x,y) \in \bbR^2$ if and only if $P_{X,Y}$ can be represented by a SCM with velocity $v$ and marginal density $p(x)$. 
\end{theorem}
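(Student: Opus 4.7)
The plan is to reduce the joint-score identity \eqref{eq:joint:id} to the conditional-score identity \eqref{eq:cond:score:continuity}, which was obtained from the continuity equation in log form, and then to invoke the uniqueness result for the continuity equation together with \cref{thm:scm:velocity}.

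First I would handle the translation between joint and conditional scores. Since $p(x,y) = p(y|x)p(x)$, taking logs and differentiating gives the two identities
\begin{align*}
    s_x(x,y) &= s_x(y|x) + s_x(x), \\
    s_y(x,y) &= s_y(y|x),
\end{align*}
the second because $p(x)$ does not depend on $y$. Substituting these into \eqref{eq:joint:id} shows that \eqref{eq:joint:id} is equivalent (pointwise, on the full support of $P_{X,Y}$) to the conditional-score identity
\begin{align*}
    s_x(y|x) = -\partial_y v(y,x) - v(y,x)\, s_y(y|x).
\end{align*}
So the task reduces to showing that this conditional identity holds for all $(x,y)$ if and only if $P_{X,Y}$ has a SCM representation with velocity $v$ and marginal $p(x)$.

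Next I would address the two directions. For ($\Leftarrow$): if $P_{X,Y}$ arises from a SCM with velocity $v$, then by the discussion preceding \cref{thm:scm:velocity}, the conditional density $p(y|x)$ is the pushforward of $p(y|x_0)$ along the flow generated by $v$ from any fixed $x_0$. By the log form of the continuity equation \eqref{eq:log:continuity} (with the time coordinate played by $x$), the conditional identity above holds on the support. For ($\Rightarrow$): assume the conditional identity holds for all $(x,y)$. Then $(p(y|x))_{x \in \bbR}$ is a family of densities satisfying the continuity equation \eqref{eq:continuity:eqn} with velocity $v$, with initial condition $p(y|x_0)$ at any fixed $x_0$. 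By the uniqueness of solutions to the continuity equation under the regularity assumed in \cref{appx:flows}, this family coincides (almost everywhere) with the pushforward of $p(y|x_0)$ along the flow generated by $v$. Combined with the given marginal $p(x)$, this reconstructs the joint distribution as the one induced by the SCM associated to $(v, p(y|x_0))$ via \cref{thm:scm:velocity}.

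The main obstacle is the $(\Rightarrow)$ direction, specifically invoking uniqueness of solutions to the continuity equation. One must check that $p(y|x)$ is a sufficiently regular solution (e.g., the differentiability and full-support hypotheses of \cref{thm:id} suffice), and that the almost-everywhere uniqueness given in \cref{appx:flows} upgrades to the everywhere statement in \eqref{eq:joint:id} under continuity of the scores and velocity. The forward direction and the joint-to-conditional bookkeeping are essentially routine once the continuity equation is in hand.
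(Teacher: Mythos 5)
Your proposal is correct and follows essentially the same route as the paper: translate between joint and conditional scores via $p(x,y)=p(y|x)p(x)$, recognize the resulting identity as the log-form continuity equation in the conditional density with $x$ playing the role of time, and invoke the uniqueness of continuity-equation solutions (\cref{thm:ce:solutions}) together with \cref{thm:scm:velocity} for the two directions. Your write-up is if anything slightly more careful than the paper's on the joint-to-conditional bookkeeping and on the almost-everywhere versus everywhere issue in the converse, but the substance is identical.
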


Observe that \eqref{eq:joint:id} is entirely in terms of observable scores, and thus is agnostic to the unobserved noise distribution $P_{\epsilon_y}$. Indeed, SCMs with the same mechanism but different noise distributions will still satisfy \eqref{eq:joint:id} with their respective scores. This makes it an especially suitable objective for functional causal discovery, which asks whether distributions can be generated by a restricted mechanism class, while leaving the noise distributions unspecified.

\section{Velocity-Based Causal Discovery}
\label{sec:velocity:discovery}

\begin{figure}[t]
    \centering    \includegraphics[width=1\linewidth]{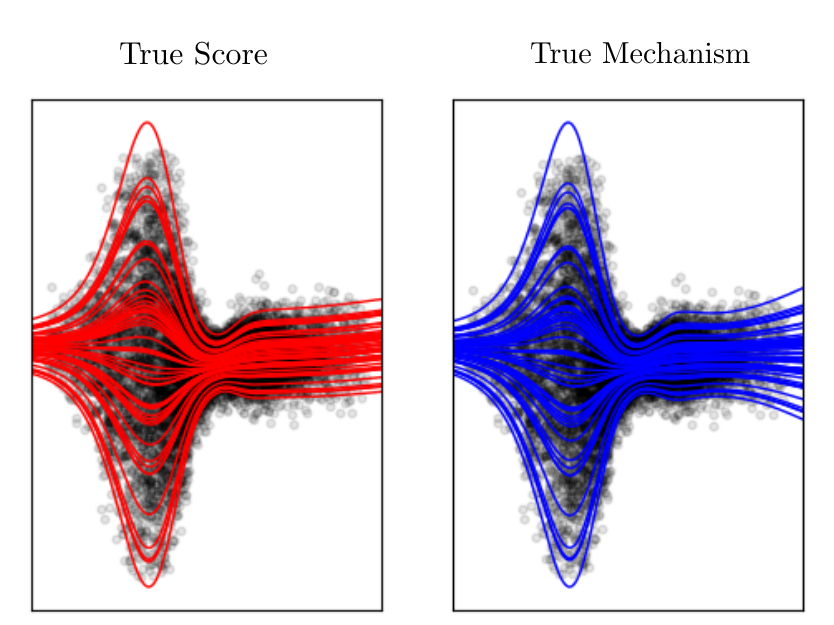}
    \vspace{-20pt}
    \caption{If the model is well-specified and the true score is given, velocity functions as obtained by minimizing \cref{eqn:loss} (up to Monte Carlo approximation error) can be integrated numerically (\textcolor{red}{left}) to recover the true mechanism (\textcolor{blue}{right}).}
    \label{fig:gt_score_estim}
    \vspace{-15pt}
\end{figure}

For methods that resolve the direction of causality by fitting a model to the data in both candidate directions, some restriction to model complexity must be made. 
As discussed in \cref{sec:bg:discovery}, many existing methods assume some variant of ANM in order to avoid making specific distributional assumptions about the unobserved noise. Our velocity-based method, described in this section, also makes no assumptions on the noise distribution by capturing its influence via score estimation, and allows all modeling and complexity control to be imposed via the parametrization of the velocity. As established in previous sections, this allows the model to be substantially more flexible than ANMs or LSNMs.  

We define a goodness-of-fit (GoF) statistic of a velocity function $v$ as, 
\begin{align}
    \label{eqn:loss}
    \bfL(v) := \bbE[
    \left(s_x(X) - \partial_y v(Y, X) - s_{v}(X,Y)\right)^2
    ],
\end{align}
which is derived from \eqref{eq:joint:id} and uses the notation
\begin{align}
    \label{eqn:directionaldx}
    s_{v}(x,y) := s_{x}(x, y) + v(y, x) s_{y}(x, y) \;.
\end{align}
This notation reflects the fact that $s_v$ is  the directional derivative of $\log p(x,y)$ along the causal curve, i.e., in the direction $\partial_{x}(x, y(x)) = (1, v(y(x), x))$ (see  \cref{sec:density:curve}). 

We propose to minimize the GoF directly \eqref{eqn:loss}, which is zero if and only if \eqref{eq:joint:id} holds $P_{X,Y}$-almost everywhere, to estimate the velocity given the scores. See \cref{fig:gt_score_estim} for an example; when the true score is given, minimizing \eqref{eqn:loss} recovers the true mechanism accurately. In practice, the score function is unknown and a two-step approach is required. First, we estimate the score nonparametrically \citep{zhou2020nonparametric}; second, we estimate the velocity by minimizing \eqref{eqn:loss}, \eqref{eqn:loss:rev} where the scores are replaced by estimators and the expectation is estimated empirically \eqref{eqn:loss_est}. 

For causal discovery, we also estimate the model in the $Y \to X$ direction by minimizing
\begin{align}
    \label{eqn:loss:rev}
    \tilde{\bfL}(\tv) := \bbE[
    \left(s_y(Y) - \partial_x \tv(X, Y) - s_{\tv}(X, Y)\right)^2
    ] \;. 
\end{align}
In principle, an estimated velocity could be integrated to a base point in order to estimate the noise variables, i.e., $\hat{\epsilon}_{y} = \hat{\varphi}_{X,x_0}(Y)$, which could then be used as residuals in independence tests for causal discovery (as in RESIT). However, for the remainder of this paper, we will use the more direct approach of using the value of the objective itself to determine causal direction. The resulting two-step method is as follows: 
\begin{enumerate}[itemsep=0pt,topsep=0pt]
    \item Estimate joint and marginal scores from data. 
    \item Estimate causal velocity in both  directions by minimizing \eqref{eqn:loss}, \eqref{eqn:loss:rev}, with expectation taken over the data. Choose as causal whichever direction has a lower value of objectives \eqref{eqn:loss}, \eqref{eqn:loss:rev} evaluated on the data. 
\end{enumerate}

As might be expected, this method requires accurate estimates of the scores. However, since \eqref{eq:joint:id} expresses a fixed relationship between the scores and the velocity, an inaccurate score estimate will produce an inaccurate velocity that nonetheless could still yield a small value of \eqref{eqn:loss}, thereby potentially introducing error into the subsequent causal discovery. \cref{fig:gof_gt} shows that when the true score is given and the model is well-specified, \eqref{eqn:loss:rev} is orders of magnitude larger than \eqref{eqn:loss}, which identifies the causal direction with certainty regardless of the sample size. In practice, the causal discovery performance improves as the score estimation improves in sample size (see \cref{sec:sample_size} for a more in-depth evaluation). In  the following section, we show that the empirical estimator of \eqref{eqn:loss} converges at a rate controlled by the rate at which the score estimator converges. 

\begin{figure}[t!]
    \centering    \includegraphics[width=1\linewidth]{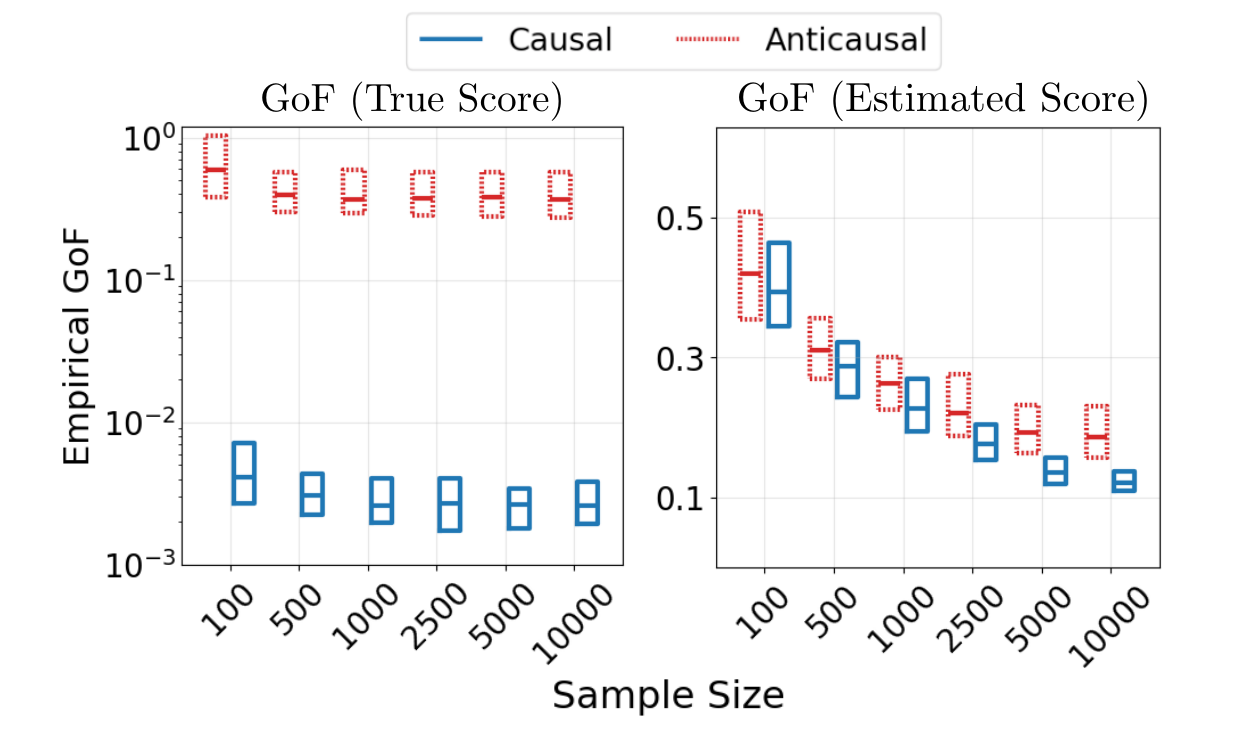}
    \vspace{-20pt}
    \caption{If the ground truth score is known, the anticausal GoF is orders of magnitude larger than the causal GoF. In this case, cause can be distinguished from effect with certainty even with as few as 100 samples. Right: As the score estimate quality improves with larger sample sizes, the causal GoF decreases (\cref{thm:consistency}), and the gap between the causal and anticausal GoF widens.}
    \label{fig:gof_gt}
    \vspace{-5pt}
\end{figure}

\subsection{Consistency of Empirical GoF Estimation}
\label{sec:asymptotics}

Let $(x_i,y_i)_{i=1}^n$ denote the available data. The two-step procedure described above yields the following objective function for estimating the velocity
\begin{align}
    \hat {\bfL}_n(v) = \frac{1}{n} \sum_{i=1}^n (
        \left(\hat s_x(x_i) - \partial_y v(y_i, x_i) - \hat s_{v}(x_i,y_i)\right)^2. \label{eqn:loss_est}
\end{align}
Recall that $\hat s_v(x_i,y_i) = v(y_i,x_i)\hat s_y(x_i,y_i) + \hat s_x(x_i,y_i)$, and here $\hat s_x, \hat s_y$ are the score estimators, for which we use either the Stein estimator \cite{li2018gradient} or one based on a kernel density estimator \citep{wibisono24a_score}. These estimators can both be viewed as doing regularized, vector-valued reproducing kernel Hilbert space (RKHS) regression on the true scores \cite{zhou2020nonparametric}. Based on this, the following result shows that our estimator for the loss function converges at a rate no worse than the convergence rate of the score function estimators. In the following theorem we denote $s_{x,y}^{(x)} := s_{x}(x,y)$, $s_{x,y}^{(y)} := s_{y}(x,y)$ for convenience, and similarly for the corresponding estimators. %

\begin{theorem}\label{thm:consistency}
    Let $s_x \in \calH_{\calX}$ and  $s_{x,y}^{(x)},  s_{x,y}^{(y)} \in {\calH}_{\calX, \calY}$ where $\calH_{\calX}, \calH_{\calX, \calY}$ are RKHSs induced by bounded kernels $k_x < B, k_{x,y} = k_x \otimes k_y < B$. Additionally, let  $v \in C_b^1(\bbR^2,\bbR )$. If the score estimators have bounded RKHS norm
    that converges, such that each of ${\| \hat s_x - s_{x}  \|_{\calH_{\calX}}}$, ${\| \hat s_{x,y}^{(x)} - s_{x,y}^{(x)} \|_{\calH_{\calX,\calY}}}$, ${\| \hat s_{x,y}^{(y)} - s_{x,y}^{(y)} \|_{\calH_{\calX,\calY}}}$ is $\calO_p(n^{-\frac{1}{\alpha}})$ 
    for some $\alpha > 0$, then $ \hat\bfL_n(v) - \bfL(v) = \calO_p(n^{-\frac{1}{\beta}})$, for $\beta   = \max \{\alpha, 2\}$. 
\end{theorem}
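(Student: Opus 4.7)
The plan is to split the target error into a Monte Carlo term (using true scores) and a plug-in term (replacing true with estimated scores), and then control each piece separately. Concretely, define the ``oracle'' empirical loss
\begin{equation*}
\bar{\bfL}_n(v) := \frac{1}{n}\sum_{i=1}^n \bigl( s_x(x_i) - \partial_y v(y_i,x_i) - s_v(x_i,y_i) \bigr)^2,
\end{equation*}
so that $\hat{\bfL}_n(v) - \bfL(v) = \bigl(\hat{\bfL}_n(v) - \bar{\bfL}_n(v)\bigr) + \bigl(\bar{\bfL}_n(v) - \bfL(v)\bigr)$. The first term captures the cost of using estimated scores; the second is standard sampling error.

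For the Monte Carlo term I would first observe that the integrand in $\bfL(v)$ is almost surely bounded under the stated assumptions. Because $s_x\in\calH_\calX$ and $s_{x,y}^{(x)},s_{x,y}^{(y)}\in\calH_{\calX,\calY}$ with kernels bounded by $B$, the reproducing property gives pointwise bounds $|s_x(x)|\leq\sqrt{B}\,\|s_x\|_{\calH_\calX}$, and similarly for the joint scores; moreover $v,\partial_y v$ are uniformly bounded since $v\in C_b^1(\bbR^2,\bbR)$. Thus the summands $R_i^2 := (s_x(x_i) - \partial_y v(y_i,x_i) - s_v(x_i,y_i))^2$ are uniformly bounded, so Chebyshev (or the CLT) yields $\bar{\bfL}_n(v) - \bfL(v) = \calO_p(n^{-1/2})$.

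The plug-in term is the main work, and I would handle it by writing each per-sample residual difference as a difference of squares. Let $\hat R_i$ and $R_i$ denote the residuals at $(x_i,y_i)$ using estimated and true scores. Then $\hat R_i^2 - R_i^2 = (\hat R_i - R_i)(\hat R_i + R_i)$. The factor $\hat R_i - R_i$ expands as
\begin{equation*}
(\hat s_x - s_x)(x_i) - (\hat s_{x,y}^{(x)} - s_{x,y}^{(x)})(x_i,y_i) - v(y_i,x_i)\,(\hat s_{x,y}^{(y)} - s_{x,y}^{(y)})(x_i,y_i),
\end{equation*}
which by the reproducing property can be bounded pointwise (hence uniformly in $i$) by
$\sqrt{B}\bigl(\|\hat s_x - s_x\|_{\calH_\calX} + \|\hat s_{x,y}^{(x)} - s_{x,y}^{(x)}\|_{\calH_{\calX,\calY}} + \|v\|_\infty \|\hat s_{x,y}^{(y)} - s_{x,y}^{(y)}\|_{\calH_{\calX,\calY}}\bigr) = \calO_p(n^{-1/\alpha})$. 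The factor $\hat R_i + R_i$ is uniformly bounded by a (random) constant of order $\calO_p(1)$, since the true scores are in the RKHSs with finite norm and the estimators have bounded RKHS norm by assumption, so the same reproducing-property trick yields a uniform sup bound. Averaging over $i$ preserves these uniform bounds, giving $\hat{\bfL}_n(v) - \bar{\bfL}_n(v) = \calO_p(n^{-1/\alpha})$.

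Combining the two pieces by the triangle inequality yields the claimed rate with $\beta = \max\{\alpha,2\}$. The one place I expect to need some care is the ``uniform boundedness'' of $\hat R_i + R_i$: I must invoke the hypothesis that the estimators have bounded RKHS norm (not just converge in norm) to conclude that $\|\hat s\|_{\calH} = \calO_p(1)$, and also keep track of the fact that $v \in C_b^1$ supplies the boundedness of both $v$ and $\partial_y v$ that I use throughout. Everything else is routine bookkeeping via the reproducing property and the identity $a^2-b^2=(a-b)(a+b)$.
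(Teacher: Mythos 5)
Your proof is correct, but it is organized around a genuinely different decomposition than the paper's. The paper splits $|\hat\bfL_n(v) - \bfL(v)|$ around the population loss evaluated at the \emph{estimated} scores, $\bbE_Z f(\hat s, Z)$: the first piece is then an empirical process indexed by a data-dependent function, which forces a supremum over an RKHS ball $\calB(\calH,M)$ controlled via symmetrization and the Rademacher complexity of kernel balls (giving the $n^{-1/2}$ rate), while the second piece is a population-level bias bounded by expanding the square into nine cross-terms and applying Cauchy--Schwarz in $L_2$ together with $\|\cdot\|_{L_2}\le\|\cdot\|_{\calH}$ for bounded kernels. You instead split around the oracle empirical loss $\bar{\bfL}_n(v)$ built from the \emph{true} scores. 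This makes your first piece a plain i.i.d.\ average of bounded variables, so Chebyshev suffices for $\calO_p(n^{-1/2})$ with no uniform-convergence machinery, and your second piece is controlled entirely by sup-norm bounds obtained from the reproducing property, with the factorization $a^2-b^2=(a-b)(a+b)$ replacing the paper's term-by-term expansion; the data-dependence of $\hat s$ is harmless there because your bound on $|\hat R_i - R_i|$ is a deterministic inequality holding uniformly in $i$. Both routes invoke the same two hypotheses at the same junctures --- kernel boundedness to convert RKHS-norm convergence into pointwise (or $L_2$) convergence, and the bounded-RKHS-norm assumption on the estimators to obtain an $\calO_p(1)$ envelope for the sum $\hat R_i + R_i$ (respectively, to justify restricting to the ball) --- and both arrive at $\beta=\max\{\alpha,2\}$. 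Your argument is the more elementary of the two; what the paper's Rademacher bound buys is control that holds simultaneously over every score in the ball, which is stronger than this theorem requires but is the standard learning-theoretic route and would be the natural starting point if one later wanted uniformity over a class of velocities as well.
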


\citet{zhou2020nonparametric} showed that under smoothness assumptions on each of $\log p(x,y), k_x, k_y$, then convergence at rate $\alpha \in [3,4]$ can be achieved. See \cref{fig:gof_gt} (right) for an empirical demonstration of the behaviour of $\hat\bfL_n(v)$ in a well-specified model, and \cref{sec:sample_size} for the implications on causal discovery performance.

\subsection{Identifiability}

\label{sec:id}

In the limit of infinite data, the causal direction of an ANM is known to be identifiable under certain conditions, formulated as the non-existence of a solution to a differential equation for $\log p(x)$ \citep{hoyer2008nonlinear,zhang2009identifiability}. For fixed ANM parameters $m, p_{\epsilon}$, the set of marginal densities that satisfy the differential equation (and hence are non-identifiable) is a three-dimensional space contained in an infinite-dimensional space, and therefore it is believed that ANMs can identify causal direction in ``most cases'' \citep{peters2014causal}. 
Identifiability in ANMs is the strongest known result, as the characterization does not depend simultaneously on the parameters of both model directions, and therefore holds uniformly over the model space. 

As argued by \citet{tagasovska20aBQCD,xu2022heterscedastic,strobl2023identifying,immer2023identifiability} and others, additive noise can be an overly restrictive assumption: even basic heteroscedastic noise (i.e., a LSNM) can lead ANM-based causal discovery procedures to mis-identify the correct causal direction. %
While LSNMs extend ANMs, even that straightforward generalization significantly complicates the identifiability analysis. In analogy to ANM identifiability, \citet{strobl2023identifying,immer2023identifiability} derived a partial differential equation (PDE) whose solutions characterize non-identifiability. In contrast to ANMs, the LSNM PDE depends simultaneously on the parameters of both model directions. Therefore, it is useful  on a case-by-case basis, but does not yield a characterization of non-identifiability uniformly over the model class. 
As we show in \cref{appx:id}, a similar characterization for velocity models can be obtained by analyzing the continuity equation \eqref{eq:joint:id} in both directions. 

\begin{proposition} \label{prop:velocity:id}
   Assume that the joint distribution of the observed data can be expressed as a causal velocity model in both directions,
    \begin{align*}
        Y & \equdist \varphi_{x_0, X}(\epsilon_y) \;, \quad X \condind \epsilon_y  \;, \\
        X & \equdist \tilde{\varphi}_{y_0, Y}(\epsilon_x) \;, \quad Y \condind \epsilon_x \;,
    \end{align*}
    with corresponding velocities $v$ and $\tv$, respectively. 
    Then, as long as $\pi(x,y) = \log p(x,y)$ and the required derivatives exist, the  velocities satisfy on the support of $p(x,y)$, 
    \begin{align*}
        \partial^2_y v + \partial_y(v\cdot \pi) = \partial^2_x \tv + \partial_x(\tv \cdot \pi) \;.
    \end{align*}
\end{proposition}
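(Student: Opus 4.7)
The plan is to apply \cref{thm:id} in both causal directions and then differentiate once to eliminate the marginal log-densities, leaving only joint quantities that can be equated via Schwarz's theorem.

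Concretely, I would first invoke \cref{thm:id} in the $X \to Y$ direction. Rearranging so that the marginal score sits alone on the right-hand side, and writing $s_x(x,y) = \partial_x \pi$, $s_y(x,y) = \partial_y \pi$, this yields
\begin{align*}
    \partial_x \pi + v \, \partial_y \pi + \partial_y v = \partial_x \log p(x) \;.
\end{align*}
Because the proposition's hypothesis is that the joint law is simultaneously generated by the reverse model with velocity $\tv$, \cref{thm:id} applies equally with the roles of $x$ and $y$ swapped, giving
\begin{align*}
    \partial_y \pi + \tv \, \partial_x \pi + \partial_x \tv = \partial_y \log p(y) \;.
\end{align*}
Both identities hold on the support of $p(x,y)$.

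Next, I would differentiate the first equation with respect to $y$ and the second with respect to $x$. The key observation is that $\partial_x \log p(x)$ depends only on $x$, so its $y$-derivative vanishes identically, and likewise $\partial_x \partial_y \log p(y) = 0$. This produces
\begin{align*}
    \partial_y \partial_x \pi + \partial_y\bigl( v \, \partial_y \pi \bigr) + \partial_y^2 v &= 0 \;, \\
    \partial_x \partial_y \pi + \partial_x\bigl( \tv \, \partial_x \pi \bigr) + \partial_x^2 \tv &= 0 \;.
\end{align*}
By Clairaut/Schwarz, the mixed partials $\partial_y \partial_x \pi$ and $\partial_x \partial_y \pi$ coincide under the assumed regularity of $\pi$. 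Subtracting one equation from the other and cancelling the common mixed partial delivers the claimed identity, where the $\pi$ appearing inside the product on each side is read as the relevant directional partial of $\log p(x,y)$ coming from the continuity equation on that side.

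The only subtlety is that the differentiations must commute with the product structure and be well-defined pointwise on the support of $p$, which is exactly what ``the required derivatives exist'' is assumed to guarantee. There is no substantive obstacle beyond bookkeeping: the result is essentially a symmetric application of \cref{thm:id} with Schwarz's theorem, and mirrors the style of the LSNM non-identifiability PDE derived by \citet{strobl2023identifying, immer2023identifiability}, specialized to the velocity parametrization.
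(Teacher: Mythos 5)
Your proposal is correct and follows essentially the same route as the paper's own derivation in Appendix~\ref{appx:id}: apply \cref{thm:id} in each direction, differentiate in $y$ and $x$ respectively so the marginal score terms drop out, and equate the mixed partials of $\pi$ via Schwarz's theorem. Your reading of the statement's compressed notation $\partial_y(v\cdot\pi)$ as $\partial_y(v\cdot\partial_y\pi)$ (and likewise on the other side) is also the intended one, matching the expanded identity \eqref{eq:id:main}.
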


Expressions for the relevant partial derivatives of $\pi$ in terms of the forward model parameters $\varphi,v$ are given in \cref{appx:id}. 
We show in \cref{appx:anm:id} how the known ANM non-identifiability differential equation follows naturally from this characterization. In \cref{appx:lsnm:id}, we use it to derive a criterion for non-identifiability that holds uniformly over the class of LSNMs, albeit at a loss of interpretability. Thus, we expect that a general analysis of identifiability---if one is possible---may require new techniques.

\section{Related Work}
\label{sec:related}

\textbf{Score Matching ANMs\ \ } A recent line of work also use signatures in the score function for causal discovery in ANMs. \citep{rolland2022score, montagna2023scalable} use properties of Gaussian noise to derive conditions on the score that are satisfied by non-cause variables. \citep{montagna2023causal} show that even under arbitrary distributions the score is a deterministic function of the noise variables, and fit the model using nonparametric regression to evaluate this condition. These methods are restricted to the ANM case, but are focused on sink node (non-cause) identification in multivariate SCMs, while our focus is on more general model classes in the bivariate setting. The continuity equation \eqref{eq:cond:score:continuity} simplifies for ANMs, and we show how to recover the identities used in this line of work in \cref{sec:appendix:scorebasedanm}.

\textbf{Cocycles in Causal Modeling\ \ }
Connections between counterfactual causal models and dynamical systems have been previously established by \citet{dance2024causal} in a more general setting, using a class of maps called cocycles. A two-parameter flow is an example of a cocycle. \citet{dance2024causal} focused on distribution-robust inference with a known causal ordering with multiple cause variables, rather than bivariate discovery. Furthermore, they do not analyze the cocycle as a dynamical system nor make any connection to the score function.  

\textbf{Continuous Normalizing Flows\ \ } A popular class of velocity-based probabilistic model are  continuous normalizing flows (CNF) \citep{chen2018neural}. There, instead of conditioning, time is an auxiliary variable, and the generative model is for a marginal density $\varphi_{0,1}(y_0)$, where $y_0$ is drawn from an arbitrary base distribution. Our learning objective \eqref{eqn:loss_est}, which targets conditional distributions instead, is similar to recently proposed simulation-free objectives, which attempt to target the velocity directly in CNFs, e.g., Flow Matching \citep{lipman2023flow}. Normalizing flows have been used for causal inference \citep{khemakhem21a_carfl,javaloy2024causal} with a known causal graph. The causal auto-regressive flow model \citep{khemakhem21a_carfl} when restricted to the bivariate case represents a flexible LSNM, and was used for likelihood-based causal discovery. \citet{tu2022optimal} propose to analyze the CNF velocity as an alternative decision rule for causal discovery in ANMs.

\section{Experiments}
\label{sec:experiments}

\begin{table}[bt]
    \centering
    \vspace{-8pt}
    \caption{Accuracy (and AUDRC) of velocity models for determining the correct causal direction, evaluated on a collection of synthetic datasets (columns). Score is estimated using the Stein method with the Gaussian kernel. The results shown for LOCI (NN/Spline), IGCI, and CGCI represent the best performance over different variants. The best performing velocity model, and, if applicable, the overall best model, are indicated in \textbf{bold}. KDE results and a subsampled $n=1000$ setting can be found in \cref{appx:experiments:additional_synthetic}.}
    \label{tab:synthetic}
    \vspace{5pt}
\resizebox{0.45\textwidth}{!}{
\begin{tabular}{c|c|c|c|c}
    \toprule
    {\textbf{Model}} &  
    {Velocity} &
    {Sigmoid} &  
    {ANM} &
    {LSNM}  \\
    \midrule
     \texttt{B-LIN} & 91 (97) & \textbf{87 (95)} & 66 (70) & 66 (77)  \\
     \texttt{B-QUAD} & \textbf{97 (98)}  & 80 (94)  & 63 (70) & 73 (79)  \\
     \texttt{V-ANM}& 86 (97) & 36 (41) & \textbf{92 (95)} & 59 (67)   \\
     \texttt{V-LSNM}  & 89 (99) & 81 (95)   & 86 (92)  & \textbf{82 (92)} \\
     \texttt{V-NN} & 96 (99) & 56 (69)  & 65 (72) & 57 (64)  \\
     \midrule 
     LOCI (HSIC) & {40 (25)} &  {70 (86)} &  \textbf{{97 (99)}} &  \textbf{83 (89)}  \\
     LOCI (Lik) & {43 (66)} &  {43 (63)} &  {49 (56)} &  {40 (31)}  \\
     CDS & {56 (47)} & {12 (5)} & { 92 (99)} & {59 (68)} \\
     IGCI & {66 (77)} & {73 (83)} & {29 (21)} & {37 (31)}  \\
     RECI & {33 (26)} & { 13 (4)} & {34 (36)} & {38 (55)} \\
     CGCI & {52 (51)} & {41 (33)} & {85 (96)} & {76 (89)}  \\
     \bottomrule
\end{tabular}}
\vspace{-10pt}
\end{table}

We evaluate our method empirically on new synthetic datasets as well as on existing benchmarks from the literature \citep{mooij2016distinguishing,immer2023identifiability}. We study the relative performance of different velocity model classes, including velocity-based parametrizations of known classes. In \cref{sec:sample_size}, we show empirically that the performance of our method hinges on the accuracy of the score estimators. When the true score is given, our method achieves perfect causal discovery with as few as $n=100$ samples (\cref{fig:gof_gt}).

\begin{table*}[bt]
    \vspace{-10pt}
    \centering
     \caption{Accuracy (and AUDRC) of velocity models on benchmark data. LOCI (NN/Spline), IGCI, and CGCI represent best results over all variants. The best performing velocity model, and, if applicable, the overall best model, are indicated in \textbf{bold}. Additional comparisons on these benchmarks can be found in \citet{mooij2016distinguishing, immer2023identifiability}.}
    \label{tab:benchmark}
    \resizebox{\textwidth}{!}{
\begin{tabular}{c|cc|cc|cc|cc|cc}
    \toprule
    {\textbf{Model}} & \multicolumn{2}{c}{SIM} & \multicolumn{2}{c}{SIM-C} & \multicolumn{2}{c}{SIM-G} & \multicolumn{2}{c}{T\"{u}} & \multicolumn{2}{c}{T\"{u} (cts.)}\\
    & \texttt{KDE} & \texttt{STEIN}  & \texttt{KDE} & \texttt{STEIN}  &  \texttt{KDE} & \texttt{STEIN} & \texttt{KDE} & \texttt{STEIN}
    & \texttt{KDE} & \texttt{STEIN}
   \\
    \midrule
     \texttt{B-LIN}& 40 (39) & 58 (62) & 56 (50) & 55 (54) & 83 (93) & 71 (85) &\textbf{71 (76)} & 54 (59) &\textbf{78 (81)} & 55 (68) \\
     \texttt{B-QUAD}& 39 (37) & 60 (64) & 43 (37) & 62 (54) & \textbf{94 (99)} & 71 (82) & 52 (65) & 56 (58) & 58 (68) & 61 (68) \\
     \texttt{V-ANM}& 36 (39) & \textbf{63 (77)} & 52 (49) & \textbf{76 (71)} & 58 (77) & 77 (93) & 68 (78) & 60 (64) & 74 (82) & 69 (72) \\
     \texttt{V-LSNM}& 39 (39) & 62 (67) & 52 (49) & \textbf{76 (71)} & 74 (92) & 72 (88) & 58 (70) & 62 (65) & 57 (73) & 68 (73) \\
     \texttt{V-NN} & 37 (38) & 57 (69) & 48 (44) & 60 (54) & 84 (95) & 68 (78) & 58 (70) & 59 (65) & 65 (72) & 64 (68) \\
     \midrule 
     LOCI (HSIC) & \multicolumn{2}{c|}{\textbf{79 (89)}} &\multicolumn{2}{c|}{\textbf{83 (93)}} &\multicolumn{2}{c|}{81 (91)} & \multicolumn{2}{c|}{ 60 (56)}& \multicolumn{2}{c}{57 (58)}\\ 
     LOCI (Lik) & \multicolumn{2}{c|}{52 (68)} &\multicolumn{2}{c|}{50 (63)} &\multicolumn{2}{c|}{78 (89)} & \multicolumn{2}{c|}{57 (66)}& \multicolumn{2}{c}{60 (66)}\\ 
     CDS & \multicolumn{2}{c|}{68 (85)} &\multicolumn{2}{c|}{78 (89)} &\multicolumn{2}{c|}{73 (82)} & \multicolumn{2}{c|}{66 (62)}& \multicolumn{2}{c}{60 (55)} \\
     IGCI& \multicolumn{2}{c|}{37 (33)} &\multicolumn{2}{c|}{42 (35)} &\multicolumn{2}{c|}{86 (96)} & \multicolumn{2}{c|}{60 (71)}& \multicolumn{2}{c}{53 (64)} \\
     RECI & \multicolumn{2}{c|}{45 (48)} &\multicolumn{2}{c|}{56 (58)} &\multicolumn{2}{c|}{42 (41)} & \multicolumn{2}{c|}{\textbf{72 (88)}}& \multicolumn{2}{c}{71 (88)}\\
     CGCI & \multicolumn{2}{c|}{68 (84)} &\multicolumn{2}{c|}{76 (91)} &\multicolumn{2}{c|}{75 (88)} & \multicolumn{2}{c|}{67 (64)}& \multicolumn{2}{c}{61 (57)}\\
     \bottomrule
\end{tabular}}
\vspace{-8pt}
\end{table*}

\textbf{Velocity Estimation\ \ }
We use three novel parameterizations of the velocity: a 2-layer MLP (\texttt{V-NN}), and linear/quadratic basis families (\texttt{B-LIN}/\texttt{B-QUAD}). We also experiment by fitting ANM and LSNMs via their velocity parameterizations (\texttt{V-ANM} and \texttt{V-LSNM}). See \cref{tab:examples} and \cref{appx:experiment} for details. Given an estimate of the score, we minimize \eqref{eqn:loss_est} using the Adam optimizer \citep{adam} to estimate the velocity. The term $\partial_y v(y, x)$ is obtained by automatic differentiation. The datasets we consider are small ($n < 10^5$), and thus we take full batch gradient steps. To mitigate potential non-identifiability and regularize estimation, we penalize the complexity of the model via higher order derivative terms $d^ky(x)/dx^k$ \citep{kelly2020learning} in the case where both directions are specified to encourage selecting the simpler model. In practice we use $k = 2$.

\textbf{Score Estimation\ \ } For estimation of the score, we use the Stein gradient estimator of \citet{li2018gradient} (\texttt{STEIN}), with the Gaussian kernel and bandwidth selected by the median heuristic. We also consider the derivative of the log of a standard kernel density estimator with empirical Bayes smoothing \citep{wibisono24a_score} (\texttt{KDE}). For the KDE-based estimator, we use the exponential (Laplace) kernel, which we found to result in the best performance. 

\textbf{Computation\ \ } Experiments are written in JAX \citep{jax2018github} and carried out either on a M1 Mac or NVIDIA V100 GPU. The most costly operation is the Stein score estimation, which is $O(n^3)$ due to the inversion of an $n \times n$ matrix \citep{li2018gradient}. Note this only has to be performed once, prior to training the velocity model. With $n=1000$ and $n=5000$, the median time is ~0.02s and ~0.3s respectively on GPU, and ~0.9s and ~15s on M1.

\textbf{Evaluation\ \ } Causal direction is determined by comparing the empirical loss \eqref{eqn:loss_est} in both directions and selecting the smaller value. Either the squared or absolute values can be used; here we use the squared value. We follow the literature and report the raw causal discovery accuracy as well as the associated area under the decision rate curve (AUDRC).

\textbf{Other Methods\ \ } We compare our method to LOCI, an LSNM-based model which represents the state-of-the-art in functional causal discovery based on the results reported by \citet{immer2023identifiability}. It has two major variants, one based on the Gaussian conditional likelihood (Lik) which makes strong parametric assumptions on the conditional distribution, and one based on the RESIT approach (HSIC), which is robust to the noise distribution and thus more comparable to our method. We also compare our method to non-functional methods that do not make parametric assumptions such as CDCI \citep{duong2022bivariate} and other methods that are implemented in the Causal Discovery Toolbox \citep{cdt}, which include CDS \citep{fonollosa2019conditional}, IGCI \citep{daniuvsis2010inferring}, and RECI \citep{blobaum2018cause}.

\begin{figure}[t!]
    \centering    \includegraphics[width=1\linewidth]{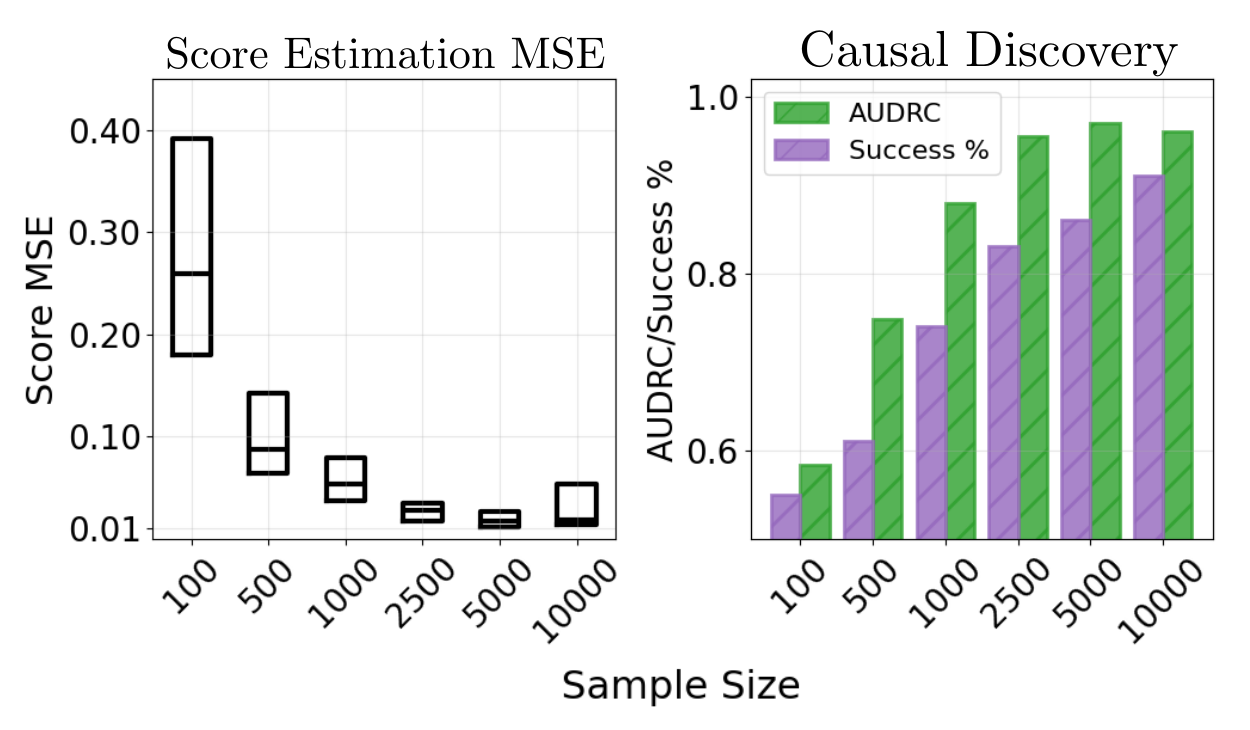}
    \vspace{-20pt}
    \caption{Score estimation and causal discovery performance as sample size increases in a well-specified ANM benchmark. Upper and lower limits indicate 1st--3rd quartile over 100 datasets. The larger error at $n=10000$ is likely due to an increased chance of encountering outliers; note the median continues to decrease.}
    \label{fig:score_mse_trace}
    \vspace{-10pt}
\end{figure}

\subsection{Datasets}

\paragraph{Synthetic}
We design two synthetic settings for which existing functional methods (e.g., LOCI) are misspecified and are expected to fail. All benchmarks consist of 100 datasets of size $n=5000$ with randomly sampled mechanisms. The Velocity benchmark is generated by numerically integrating periodic velocity functions with initial values given by the noise variable, as in \eqref{eq:flow:from:v}. The Sigmoid benchmark can be seen as a variation on LSNMs with additional post-nonlinear and affine transformations.

In all cases, noise variables are sampled as a randomly sampled invertible transformation (i.e., a random CDF transform) of Gaussian noise. As such, methods based on Gaussianity are also expected to fail. To illustrate our method as an alternative to HSIC in existing functional classes, we also design synthetic ANM and LSNM benchmarks for which \texttt{V-ANM} and \texttt{V-LSNM} are well-specified. Sample plots of datasets are given in \cref{appx:experiment:syndetails}. 

\textbf{Benchmark Data\ \ }
Following previous work \citep{blobaum2018cause, tagasovska20aBQCD,  immer2023identifiability}, we also evaluate our method on the SIM-series of simulated benchmarks and the T\"{u}bingen Cause-Effect pairs of \citet{mooij2016distinguishing}. Since our method requires the existence of log-densities, we also test removing instances from the T\"{u}bingen collection with integer-valued observations, which we refer to as \textbf{T\"{u} (cts.)} (details in \cref{appx:experiments:benchmarks}). 

\subsection{Results}

The experiments show that our two-step method, in particular with \texttt{B-LIN} and \texttt{B-QUAD} velocity models which represents novel model classes, are able to distinguish cause from effect when existing methods fail (\cref{tab:synthetic}). For non-Gaussian ANM and LSNM data, velocity parametrizations achieve performance that is competitive with the HSIC variant of LOCI. As expected, the (Gaussian) likelihood variant of LOCI performs poorly in our non-Gaussian simulations. Our method also achieves state-of-the-art performance on certain benchmarks, in particular the Gaussian SIM-G dataset which we suspect is due to the score being well-estimated, and is competitive with other methods overall (\cref{tab:benchmark}). Notably, taking only the continuous subset of the T\"{u}bingen dataset improves performance in our method but not existing methods, which emphasizes the importance of the log-density assumption. Interestingly, using the KDE estimate improves causal discovery performance in cases where the noise distributions are Gaussian (SIM-G), but also on the real data benchmark (T\"{u}). 

\subsection{Sample Size and Score Estimation}

\label{sec:sample_size}

To study the effect of score estimation on velocity estimation and downstream causal discovery, we also designed synthetic ANM and LSNM datasets with Gaussian noise, for which the true marginal cause and joint scores can be obtained analytically. For the effect variable, we compute the marginal density by Monte Carlo integration of the conditional and differentiate to obtain the marginal score.

In \cref{fig:gof_gt}, we saw that using the ground truth scores as an input to estimate the velocity determined the causal direction with certainty, regardless of the sample size used in minimizing \eqref{eqn:loss_est}. On the other hand, when the score is estimated, causal discovery performance appears to improve as sample sizes grows. Here, we provide additional evidence that the improvement in causal discovery performance is due to an improving score estimate. We use the ground truth scores to evaluate the quality of our estimated scores by evaluating the mean squared estimation error. \cref{fig:score_mse_trace} shows that the score estimation largely improves in sample size, and this corresponds to an improvement in both measures of causal discovery performance. Note the figure shows estimation of the marginal score of the effect variable---joint and marginal cause score estimation follows similar patterns. See \cref{appx:experiments:sample_size} for full tables of results. We believe improvements to score estimation will directly benefit the applicability of our methodology in future practice.

\section*{Acknowledgments}

The authors are grateful to the anonymous reviewers, whose comments helped improve the paper. 
This research was supported in part through computational resources and services provided by Advanced Research
Computing at the University of British Columbia. 
JX is supported by an Natural Sciences and Engineering Research Council of Canada
(NSERC) Canada Graduate Scholarship.
BBR acknowledges the support of NSERC: RGPIN2020-04995, RGPAS-2020-00095. HD and PO are supported by the Gatsby Charitable Foundation.

\section*{Impact Statement}

This paper presents work whose goal is to advance the field
of Machine Learning. There are many potential societal
consequences of our work, none which we feel must be
specifically highlighted here. 

\bibliography{references}
\bibliographystyle{icml2025}

\newpage
\appendix
\onecolumn

\section{Additional Details}

\subsection{Background on Flows}
\label{appx:flows}

The key results we use from dynamical systems are as the following two theorems. We refer the reader to \citet{Arnold1998,Santambrogio2015}, and references therein, for more details. 

\begin{theorem}[{\citealp[Thm.\ B.3.1, B.3.5]{Arnold1998}}]
    \label{thm:ode:flow}
    Suppose that $(t,y) \mapsto v(y,t)$ is locally Lipschitz continuous in $y$, and satisfies the local linear growth condition, 
    \begin{align*}
        |v(y,t)| \leq a(t) |y| + b(t) \;,
    \end{align*}
    where $a$ and $b$ are non-negative locally integrable functions. 
    Then the maximal solution to \eqref{eq:basic:ode} generates a unique solution flow $\varphi_{s,t}$ as in \cref{eq:ode:flow:def,eq:classical:sol}. If $v(\argdot,t)$ is $k$-times continuously differentiable in $y$ for each $t$, then so is $\varphi_{s,t}(\argdot)$. Conversely, if a flow $\varphi_{s,t}(y)$ is differentiable in $t$ at $t = s$ for every $y$, then the ODE defined by \eqref{eq:ode:from:flow} both generates and is solved by $\varphi_{s,t}$. 
\end{theorem}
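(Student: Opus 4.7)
The plan is to assemble three classical ingredients from ODE theory: Picard--Lindel\"of iteration for local existence and uniqueness, Gr\"onwall's inequality to rule out blow-up, and the variational equation for smoothness in the initial condition. Since the statement is a compilation of classical results, the proof proceeds mostly by stitching these together rather than inventing new techniques.

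First, for existence, uniqueness, and the flow property, I would rewrite \eqref{eq:basic:ode} in integral form and apply Picard iteration on a small interval around any fixed $s$. The local Lipschitz hypothesis makes the operator $y(\cdot) \mapsto y(s) + \int_s^{\cdot} v(y(u),u)\,du$ a contraction on a small ball in $C([s-\delta, s+\delta]; \bbR^d)$, yielding a unique local solution from each initial condition. To extend globally, I would apply Gr\"onwall's inequality to the integral form using the linear growth bound:
\[ |\varphi_{s,t}(y)| \leq |y| + \int_s^t \bigl(a(u)\,|\varphi_{s,u}(y)| + b(u)\bigr)\,du \;, \]
and then use local integrability of $a,b$ to conclude $|\varphi_{s,t}(y)|$ stays bounded on any compact $[s,T]$, ruling out finite-time blow-up. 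The flow axioms of \cref{def:two:p:flow} then follow from uniqueness: both $\varphi_{u,t}$ and $\varphi_{s,t}\circ \varphi_{u,s}$ solve \eqref{eq:basic:ode} with the same value at time $u$, so they must coincide.

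For $C^k$ regularity in $y$, I would induct on $k$. The base case $k=0$ (continuity in the initial condition) follows by applying Gr\"onwall to $|\varphi_{s,t}(y)-\varphi_{s,t}(y')|$ together with the local Lipschitz bound on $v$. For $k\geq 1$, differentiating the integral equation formally in $y$ gives the linear variational equation
\[ \tfrac{d}{dt} D_y\varphi_{s,t}(y) = (D_y v)(\varphi_{s,t}(y),t)\,D_y\varphi_{s,t}(y) \;, \quad D_y\varphi_{s,s}(y)=I \;, \]
which admits a unique solution by standard linear ODE theory (no new growth assumption is needed, since the equation is linear in the unknown and the coefficient $D_y v$ is continuous whenever $v$ is $C^1$). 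Higher-order derivatives follow by differentiating the variational equation and reapplying the $C^1$ argument. For the converse, given a flow $\varphi_{s,t}$ that is differentiable in $t$ at $t=s$ for each $y$, I would define $v$ by \eqref{eq:ode:from:flow} and use the flow property $\varphi_{s,t+h}(y) = \varphi_{t,t+h}(\varphi_{s,t}(y))$ to transfer differentiability at $t=s$ to arbitrary $t$:
\[ \tfrac{d}{dh}\,\varphi_{s,t+h}(y)\big|_{h=0} = \tfrac{d}{dh}\,\varphi_{t,t+h}(\varphi_{s,t}(y))\big|_{h=0} = v(\varphi_{s,t}(y), t) \;, \]
which is exactly \eqref{eq:classical:sol}; integrating in $t$ recovers \eqref{eq:ode:flow:def}.

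The main obstacle is the smoothness step: verifying that the solution of the formal variational equation is genuinely the Fr\'echet derivative of $\varphi_{s,t}$ at $y$ rather than a plausible candidate. Concretely, this requires bounding the second-order remainder $\varphi_{s,t}(y+\eta) - \varphi_{s,t}(y) - D_y\varphi_{s,t}(y)\eta$ via another Gr\"onwall estimate, uniformly on compacts, to show it is $o(|\eta|)$. Once the $C^1$ case is handled carefully, the induction to $C^k$, the flow property manipulations, and the converse direction are essentially bookkeeping consequences of uniqueness.
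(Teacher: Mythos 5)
The paper offers no proof of this statement: it is imported wholesale as Theorems B.3.1 and B.3.5 of \citet{Arnold1998} (with the continuity-equation counterpart deferred to \citet{Santambrogio2015}), so there is no internal argument to compare yours against. Your reconstruction is the standard classical proof and is essentially sound: Picard--Lindel\"of gives local existence and uniqueness from the local Lipschitz hypothesis; Gr\"onwall applied to the integral form, together with local integrability of $a$ and $b$, rules out finite-time blow-up so the maximal solution is global; the two flow axioms of \cref{def:two:p:flow} follow from uniqueness of the solution through a given space-time point; $C^k$ dependence on the initial condition comes from the variational equation by induction on $k$, and the step you single out as delicate---verifying that the solution of the variational equation really is the derivative of $\varphi_{s,t}$ in $y$, via a Gr\"onwall bound on the second-order remainder---is indeed where the work lies. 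Two small refinements. First, your Gr\"onwall display should either carry absolute values on the integral or be stated separately for $t\ge s$ and $t\le s$; this is cosmetic. Second, in the converse direction, integrating the pointwise identity $\tfrac{d}{dt}\varphi_{s,t}(y)=v(\varphi_{s,t}(y),t)$ back to the integral representation \eqref{eq:ode:flow:def} requires $t\mapsto v(\varphi_{s,t}(y),t)$ to be locally integrable, which does not follow from differentiability of the flow at $t=s$ alone; you should note that the joint continuity demanded by \cref{def:two:p:flow}, together with the regularity hypotheses on $v$ already in force in the first half of the theorem, supplies this. Neither point undermines the approach.
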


Flows generated by velocities also characterize the possible solutions to the continuity equation.

\begin{theorem}[{\citealp[Thm.\ 4.4]{Santambrogio2015}}]
    \label{thm:ce:solutions}
    Fix $v$ and assume the conditions of \cref{thm:ode:flow}. Then for any $p_s$, the family of densities $p_t = (\varphi_{s,t})_* p_s$ uniquely solves the continuity equation \eqref{eq:continuity:eqn} with initial condition density $p_s$. Moreover, every solution $(p_t)_{t\in \bbR}$ to \eqref{eq:continuity:eqn} for fixed $v$ is obtained from the corresponding flow of some initial condition density $p_s$.  %
\end{theorem}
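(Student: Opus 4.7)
The plan is to establish the two assertions of the theorem in sequence: first, that $p_t := (\varphi_{s,t})_* p_s$ is a (distributional) solution of the continuity equation with initial density $p_s$; second, that every such solution must coincide with this pushforward, which simultaneously gives uniqueness and the "every solution is obtained from the flow of some $p_s$'' claim (by taking $p_s$ as the density at time $s$).

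For existence, I would test against an arbitrary compactly supported $\phi \in C^\infty_c(\bbR^d)$. By the definition of pushforward and the regularity of $\varphi_{s,t}$ supplied by \cref{thm:ode:flow}, I can write $\int \phi\, dp_t = \int \phi \circ \varphi_{s,t}\, dp_s$ and differentiate under the integral. The chain rule together with $\tfrac{d}{dt}\varphi_{s,t}(y) = v(\varphi_{s,t}(y),t)$ yields
\begin{align*}
    \tfrac{d}{dt}\!\int \phi\, dp_t \;=\; \int \nabla \phi(\varphi_{s,t}(y))\cdot v(\varphi_{s,t}(y),t)\, dp_s(y) \;=\; \int \nabla \phi \cdot v_t\, dp_t \;.
\end{align*}
Integrating by parts (valid because $\phi$ has compact support) gives $-\int \phi\, \nabla\cdot(p_t v_t)\,dy$, which is \eqref{eq:continuity:eqn} tested against $\phi$. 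Since $\phi$ is arbitrary, the continuity equation holds in the distributional sense, and the initial condition $p_s$ is recovered from $\varphi_{s,s} = \mathrm{id}$.

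For uniqueness, I would use a duality (adjoint) argument via the backward transport equation. Fix an arbitrary terminal time $T$ and a test function $\psi \in C^\infty_c(\bbR^d)$, and define $\phi_t(y) := \psi(\varphi_{t,T}(y))$. By the flow property $\varphi_{t,T}\circ \varphi_{u,t} = \varphi_{u,T}$ and \cref{thm:ode:flow}, $\phi_t$ is $C^1$ and satisfies the backward transport equation $\partial_t \phi_t + v_t\cdot \nabla \phi_t = 0$ with $\phi_T = \psi$. For any distributional solution $q_t$ of the continuity equation with $q_s = p_s$, I would compute
\begin{align*}
    \tfrac{d}{dt}\!\int \phi_t\, dq_t \;=\; \int (\partial_t \phi_t)\, q_t\, dy \;+\; \int \phi_t\, \partial_t q_t\, dy \;=\; \int (\partial_t\phi_t + v_t\cdot \nabla \phi_t)\, q_t\, dy \;=\; 0 \;,
\end{align*}
where the middle equality uses the continuity equation (integration by parts moves $\nabla\cdot(q_t v_t)$ onto $\phi_t$). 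Hence $\int \psi\, dq_T = \int \phi_s\, dp_s = \int \psi\circ \varphi_{s,T}\, dp_s = \int \psi\, d(\varphi_{s,T})_*p_s$. Since $\psi$ is arbitrary, $q_T = (\varphi_{s,T})_* p_s = p_T$, proving uniqueness; and the second assertion then follows by taking the density at time $s$ as the required initial condition.

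The main obstacle is justifying the integrations by parts and the differentiation under the integral with minimal regularity on $v$ and $p_t$. Under the locally Lipschitz hypothesis of \cref{thm:ode:flow} the flow $\varphi_{s,t}$ is a $C^0$ homeomorphism (and $C^1$ when $v$ is $C^1$ in $y$), so if one works with classical ($C^1$) solutions the argument above is clean; for merely distributional solutions one must extend by mollification, checking that commutator errors vanish in the limit—a step reminiscent of the DiPerna–Lions renormalization, which is tractable here because Lipschitz regularity of $v$ is strictly stronger than what is needed. I would state the theorem for classical solutions in the main text and defer the mollification argument to a remark.
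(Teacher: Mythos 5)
The paper does not prove \cref{thm:ce:solutions} at all: it is imported verbatim from \citet[Thm.~4.4]{Santambrogio2015}, so the only meaningful comparison is against the textbook proof of the cited result. Your argument---existence by pushing $p_s$ forward along the flow and differentiating under the integral against compactly supported test functions, uniqueness by the adjoint/backward-transport duality with $\phi_t = \psi\circ\varphi_{t,T}$, and the second assertion deduced from uniqueness by taking the time-$s$ density as initial datum---is precisely that standard proof, and it is correct modulo the technical points you yourself flag (justifying the product rule $\tfrac{d}{dt}\int \phi_t\, dq_t$ for merely distributional, weakly continuous solutions, and mollification when $v$ is only locally Lipschitz so that $\varphi_{t,T}$ need not be $C^1$). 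One small point worth making explicit if this were written out: under the local linear growth hypothesis of \cref{thm:ode:flow}, the support of $\phi_t = \psi\circ\varphi_{t,T}$ equals $\varphi_{T,t}(\supp\psi)$, which remains compact uniformly on $[s,T]$, and $t\mapsto \sup_{\supp\phi_t}|v(\cdot,t)|$ is locally integrable; this is what licenses both the differentiation under the integral in the existence step and the integration by parts in the duality step.
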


\subsection{Density Along a Causal Curve} 
\label{sec:density:curve}

The continuity equation in \eqref{eq:joint:id} can be viewed as an Eulerian perspective on the problem, tracking velocity and density at points $(x,y)$. Alternatively, we may consider the Lagrangian perspective, viewing the problem along trajectories, which in this case are causal curves. The result is the same, but we include the analysis here to shed additional light on the problem. 

Let $y(x)$ be a causal curve with velocity $v$. 
We evaluate the log joint density along the curve $(x, y(x))$:
\begin{align}
    \label{eqn:logjointdecomposition}
     \log p(x, & y(x)) = \log p(x) \\ & + \log p(y(x_0) \mid x_0) + \log \biggl| \frac{\partial \varphi_{x,x_0}(y(x))}{\partial y(x)} \biggr| \;,   \nonumber
\end{align}
where $x_0$ is an arbitrary origin point. This follows from the usual change-of-variables formula that results from transporting along the causal curve from $(x,y(x))$ to $(x_0, y(x_0))$. 
Compared to the usual SCM construction, $\log p(y(x_0) \mid x_0)$ is the log-density of the noise, and $\varphi_{x,x_0}$ represents the residual map. Notice this ``noise'' term does not depend on $x$. This is because the noise realization is the same no matter where the counterfactual prediction is made. 

Now, we take a (total) derivative in $x$, and use the instantaneous change-of-variables formula commonly used in the neural ODEs literature \citep{chen2018neural, hodgkinson2021stochastic} to obtain
\begin{align}
    \label{eqn:totaldx}
    \frac{d \log p(x, y(x))}{dx} & = \frac{d \log p(x)}{dx} - \frac{\partial v(y(x), x)}{\partial y(x)} \nonumber \\
    & =s_x(x) - \frac{\partial v(y(x), x)}{\partial y(x)}.
\end{align}
Notice the first term is precisely the marginal score $s_x(x)$. We can take the same total derivative using the chain rule as
\begin{align}
    \label{eqn:chain_totaldx}
    & \frac{d \log p(x, y(x))}{dx} \nonumber \\ =&  \frac{\partial \log p(x, y(x))}{\partial x} + \frac{d y(x)}{dx} \frac{\partial \log p(x, y(x))}{\partial y(x)} \nonumber \\
     =&  s_{x}(x, y(x)) + v(y(x), x) s_{y}(x, y(x))
\end{align}
In fact, this is a directional derivative of the joint log-density along the causal curve, i.e., in the direction $\partial_{x}(x, y(x)) = (1, v(y(x), x))$. Hence, denoting
\begin{align}
    \label{eqn:directionaldx2}
    s_{v}(x,y(x)) := s_{x}(x, y(x)) + v(y(x), x) s_{y}(x, y(x)) \;,
\end{align}
and equating (\ref{eqn:totaldx}) and (\ref{eqn:chain_totaldx}), we obtain an expression equivalent to \eqref{eq:joint:id},
\begin{align}
    \label{eqn:gof_raw}
    & s_x(x) - \partial_{y(x)} v(y(x), x) = s_{v}(x,y(x)).
\end{align}
This is a relation between the marginal and joint log-density functions that is satisfied when the conditional distribution arises from an SCM with velocity $v$. 
It is intuitive that this is in terms of $v$ and the score, which both characterize local change.

\subsection{Score-based ANM Methods}
\label{sec:appendix:scorebasedanm}
\citet{rolland2022score, montagna2023causal, montagna2023scalable} also use the score function for functional causal discovery. There, the focus is on finding leaf nodes in a multivariate causal graph, that is, non-cause nodes. This is equivalent to finding the effect variable in the bivariate setting. Here, we show how the conditions derived there can be interpreted via the continuity equation. Throughout, let $y$ be the effect variable so that $X \to Y$ is the causal graph.

The continuity equation in our setting states \eqref{eq:continuity:eqn}
\begin{align}
    s_x(y|x) = - \dot m(x) s_y(y|x) \;,
\end{align}
where note for an ANM, $v(y,x) = \dot m(x)$ and hence $\partial_y v(y,x) = 0$. Recall that for an ANM we also have 
\begin{align}
    p(y\mid x) = p_{\epsilon}(y - m(x)),
\end{align}
where $p_\epsilon$ is the density of the noise variable $\epsilon_y$. Let $s_\epsilon = \partial_\epsilon \log p_{\epsilon}(\epsilon)$, then the continuity equation becomes
\begin{align}
    -\dot m(x) s_\epsilon(\epsilon) = -\dot m(x) s_y(y\mid x),
\end{align}
where we wrote $\epsilon = y - m(x)$ on the LHS. Now, noting that $s_y(y \mid x) = s_{y}(x,y)$, the $y$ component of the joint score, we have
\begin{align}
    s_{\epsilon}(\epsilon) = s_y(x,y),
\end{align}
which holds for all $(x,y)$ where $\dot m(x) \neq 0$ (thus, for all $x$ if $m$ is injective, corresponding to Condition (2d) in \cref{sec:appendix:regularity}). The above expression is precisely the general one used by \citet{montagna2023causal} for general non-Gaussian noise. There, they use score estimation to estimate $s_y(x,y)$, then fit a non-parametric regression model to estimate $m$ and obtain the residuals $\hat{\epsilon}$. Then, the equation above says that for the effect variable, it is possible to perfectly predict $s_y(x,y)$ from $\hat{\epsilon}$. 

Under the Gaussian noise assumption $\epsilon_y \sim \mathcal{N}(0, \sigma^2)$, \citet{rolland2022score, montagna2023scalable} derive a more specific equation that is easily derived independently of the continuity equation. Consider the $y$ component of the joint score, 
\begin{align}
    s_y(x,y) = s_y(y \mid x) = s_{\epsilon}(y - m(x)).
\end{align}
Under the Gaussian noise assumption, $s_{\epsilon}(\epsilon) = -\epsilon/\sigma^2$. Thus, we have 
\begin{align}
    s_y(x,y) = \frac{m(x) - y}{\sigma^2}. 
\end{align}
This indicates that $\partial_y s_y(x,y) = -\sigma^{-2}$, which is constant over $(x,y)$. \citet{rolland2022score} hence devise an algorithm to estimate the Hessian of the log-likelihood (i.e., Jacobian of the score) and select the node with minimum empirical variance as the effect.

\section{Proofs}

\subsection{Regularity Conditions of SCM-Velocity Correspondence}
\label{sec:appendix:regularity}

The correspondence in \cref{thm:scm:velocity} between SCMs and velocity-initial condition density pairs $(v,p(y|x_0))$ essentially follows from the known results on dynamical systems reviewed in \cref{appx:flows}. The only subtleties are under what conditions the relationships hold over all of $\bbR^2$, rather than on a subset. We discuss them here before proving \cref{thm:scm:velocity}.

\begin{enumerate}[itemsep=0pt,topsep=0pt]
    \item $P_{X,Y}$ has full support on $\bbR^2$ and is absolutely continuous with respect to Lebesgue measure. 

    \item For any SCM mechanism $f$:
    \begin{enumerate}[itemsep=0pt,topsep=0pt]
        \item For each $x \in \bbR$, $f(x,\argdot)$ is a bijection $\bbR \to \bbR$.
        
        \item The mapping $(x,x',y) \mapsto f_{x'}(f_{x}^{-1}(y))$ is continuous for each $x,x',y \in \bbR^3$.

        \item The mapping $x' \mapsto f_{x'}(f_{x}^{-1}(y))$ is differentiable at $x' = x$ for all $x,y \in \bbR^2$. 

        \item There is a unique $x_0 \in \bbR$ such that for all $\epsilon_y \in \bbR$, $f(x_0,\epsilon_y) = \epsilon_y$.
    \end{enumerate}

    \item For any velocity field $v$:
    \begin{enumerate}[itemsep=0pt,topsep=0pt]
        \item For each $x \in \bbR$, $v(\argdot,x)$ is locally Lipschitz continuous or $k$-times continuously differentiable.

        \item Satisfies 
        \begin{align*}
            |v(y,x)| \leq a(x) |y| + b(x) \;,
        \end{align*}
        where $a$ and $b$ are non-negative locally integrable (integrable on every compact subset of $\bbR$) functions. 
    \end{enumerate}
    
\end{enumerate}

Within the confines of standard practice, these assumptions are not restrictive. Assumption 2(d) may require some care, but is easy to achieve in practice. For example, with LSNMs, it becomes
\begin{align*}
    m(x_0) + e^{h(x_0)} \epsilon_y = \epsilon_y \;,
\end{align*}
which requires each of $m,h$ to have a unique zero at $x_0$. Assuming that each of $m,h$ have at least one zero, if they are injective then the zero is unique. 

In general, assuming that at least one such $x_0$ exists, a sufficient condition for uniqueness is that $x\neq x'$ implies that $f(x,\epsilon_y) \neq f(x',\epsilon_y)$ for \emph{some} $\epsilon_y \in \bbR$. 

Assumption 3(b) on the velocity ensures that a local solution to the ODE $dy/dx = v(y,x)$ extends to a unique global solution
\begin{align*}
    \varphi_{x_0,x}(y) = y + \int_{x_0}^x v(\varphi_{x_0,u}(y), u)\ du \;,
\end{align*}
whereas assumption 3(a) ensures that
\begin{align*}
    \frac{d}{dx} \varphi_{x_0,x}(y) = v(\varphi_{x_0,x}(y),x) \;, \quad \varphi_{x_0,x_0}(y) = y \;,
\end{align*}
holds for all $x$ in the solution domain. See, for example, \citet[][Appendix B]{Arnold1998} for details. 

\begin{proof}[Proof of \cref{thm:scm:velocity}]
    First, assume that $P_{X,Y}$ has full support on $\bbR^2$ and is specified by a bijective SCM 
    \begin{align*}
        Y = f(X, \epsilon_y) \;, \quad X \condind \epsilon_y
    \end{align*}
    with noise density $\epsilon_y \sim p_0$.  Then
    \begin{align*}
        \varphi_{x,x'}(y) = f_{x'}(f_{x
        }^{-1}(y))
    \end{align*}
    defines a continuous flow. 
    Along with the assumed differentiability in 2(c), by \citep[][Thm.\ B.3.5]{Arnold1998}, 
    \begin{align*}
        v(y,x) := \frac{d}{dx'} f_{x'}(f_{x
        }^{-1}(y)) \bigg|_{x'=x} 
    \end{align*}
    defines the velocity that generates the flow. As long as the mechanism is bijective on all of $\bbR$ for each $x$ then this relationship holds over all of $\bbR^2$. 
    By the uniqueness of $x_0$, $p(y|x_0) = p_0(y)$ is uniquely specified by the SCM.  

    Conversely, fix a pair $(v, p(y|x_0))$ such that $p(y|x_0)$ has full support on $y\in\bbR$. As long as $v$ is sufficiently regular so as to yield a flow $(x,y) \mapsto \varphi_{x_0,x}(y)$ over all of $\bbR^2$, then the flow is unique and therefore
    \begin{align*}
        f(X,\epsilon_y) := \varphi_{x_0,X}(\epsilon_y) \;, 
    \end{align*}
    with $\epsilon_y \sim p(y|x_0)$ uniquely specifies a bijective SCM, except for the marginal distribution $P_X$. Assumption 3(b) above guarantees the global uniqueness of the flow-based mechanism. 
\end{proof}

\subsection{Proof of \cref{thm:id}}

\begin{proof}[Proof of \cref{thm:id}]
    By \cref{thm:ce:solutions}, the continuity equation is uniquely solved by densities generated by the flow associated with $v$ and some initial condition density $p(y|x_0)$. Hence, if $P_{X,Y}$ can be represented by a SCM with velocity $v$ then its conditional $p(y|x)$ will satisfy the continuity equation 
    \begin{align*}
        \partial_x  p(y|x) = -\partial_y (p(y|x) \cdot v(y,x)) \;.
    \end{align*}
    Since $P_{X,Y}$ (and hence $P_{Y|X}$) is assumed to have full support, its density will be strictly positive and the continuity equation can be written
    \begin{align*}
        \frac{\partial_x  p(y|x) }{p(y|x)} = -\partial_y v(y,x) - v(y,x) \frac{\partial_y p(y|x)}{p(y|x)} \\
        \partial_x \log p(y|x) = -\partial_y v(y,x) - v(y,x) \partial_y \log p(y|x) \;.
    \end{align*}
    Noting that $\partial_y p(y|x) = \partial_y p(x,y)$ and adding $\partial_x \log p(x)$ to both sides, we have
    \begin{align*}
        \partial_x \log p(x,y) = -\partial_y v(y,x) - v(y,x) \partial_y \log p(x,y) + \partial_x \log p(x) \;,
    \end{align*}
    which is \eqref{eq:joint:id}.
    
    Conversely, if $P_{X,Y}$ satisfies \eqref{eq:joint:id} then $p(y|x)$ must satisfy the continuity equation and therefore the associated velocity can be used to represent $p(y|x)$ with the SCM constructed as in \cref{thm:scm:velocity}. 
\end{proof}

\subsection{Proofs for \cref{sec:asymptotics}}

\begin{theorem}
    Let $s_x \in \calH_{\calX}$ and  $s_{x,y}^{(x)},  s_{x,y}^{(y)} \in {\calH}_{\calX, \calY}$ where $\calH_{\calX}, \calH_{\calX, \calY}$ are RKHSs induced by bounded kernels $k_x < B, k_{x,y} = k_x \otimes k_y < B$. Additionally, let  $v \in C_b^1(\bbR^2,\bbR )$. Then, if the score estimators have bounded norm
    and converge in RKHS norm, such that each of $\left \| \hat s_x - s_{x} \right \|_{\calH_{\calX}}$, $\left \| \hat s_{x,y}^{(x)} - s_{x,y}^{(x)}\right \|_{\calH_{\calX,\calY}}$, $\left \| \hat s_{x,y}^{(y)} - s_{x,y}^{(y)}\right \|_{\calH_{\calX,\calY}}$ is $\calO_p(n^{-\frac{1}{\alpha}})$ 
    for some $\alpha > 0$, then $ \hat\bfL_n(v) - \bfL(v) = \calO_p(n^{-\frac{1}{\beta}})$, for $\beta   = \max \{\alpha, 2\}$. 
\end{theorem}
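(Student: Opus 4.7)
The plan is to decompose the total error as
\begin{align*}
\hat{\mathbf{L}}_n(v) - \mathbf{L}(v) = \underbrace{\hat{\mathbf{L}}_n(v) - \tilde{\mathbf{L}}_n(v)}_{\text{score estimation error}} + \underbrace{\tilde{\mathbf{L}}_n(v) - \mathbf{L}(v)}_{\text{Monte Carlo error}},
\end{align*}
where $\tilde{\mathbf{L}}_n(v)$ is the empirical loss with the \emph{true} scores plugged in. The second term is a sample mean of bounded i.i.d.\ random variables, so by the usual $\sqrt{n}$ rate (e.g.\ Chebyshev) it is $\mathcal{O}_p(n^{-1/2})$; combining with the first term yields the stated $\beta=\max\{\alpha,2\}$ rate.

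For the first term I would write the summand difference as a product. Letting $a_i := s_x(x_i) - \partial_y v(y_i,x_i) - s_v(x_i,y_i)$ and $\hat a_i$ its plug-in analogue, the identity $\hat a_i^2 - a_i^2 = (\hat a_i - a_i)(\hat a_i + a_i)$ gives
\begin{align*}
\hat{\mathbf{L}}_n(v) - \tilde{\mathbf{L}}_n(v) = \frac{1}{n}\sum_{i=1}^n (\hat a_i - a_i)(\hat a_i + a_i).
\end{align*}
Here $\hat a_i - a_i = [\hat s_x(x_i) - s_x(x_i)] - v(y_i,x_i)[\hat s_{x,y}^{(y)}(x_i,y_i) - s_{x,y}^{(y)}(x_i,y_i)] - [\hat s_{x,y}^{(x)}(x_i,y_i) - s_{x,y}^{(x)}(x_i,y_i)]$.

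The crux is to translate RKHS-norm convergence of the score estimators to \emph{uniform} pointwise convergence. The reproducing property and Cauchy–Schwarz yield, for any $f\in \mathcal{H}_\mathcal{X}$, $|f(x)| = |\langle f, k_x(x,\cdot)\rangle| \le \|f\|_{\mathcal{H}_\mathcal{X}}\sqrt{k_x(x,x)} \le B^{1/2}\|f\|_{\mathcal{H}_\mathcal{X}}$, and analogously on the product RKHS $\mathcal{H}_{\mathcal{X},\mathcal{Y}}$ with the tensor product kernel $k_x\otimes k_y<B$. Applying this to $\hat s_x - s_x$, $\hat s_{x,y}^{(x)} - s_{x,y}^{(x)}$, and $\hat s_{x,y}^{(y)} - s_{x,y}^{(y)}$ gives
\begin{align*}
\sup_i |\hat a_i - a_i| \le C\bigl(\|\hat s_x - s_x\|_{\mathcal{H}_\mathcal{X}} + \|\hat s_{x,y}^{(y)} - s_{x,y}^{(y)}\|_{\mathcal{H}_{\mathcal{X},\mathcal{Y}}} + \|\hat s_{x,y}^{(x)} - s_{x,y}^{(x)}\|_{\mathcal{H}_{\mathcal{X},\mathcal{Y}}}\bigr) = \mathcal{O}_p(n^{-1/\alpha}),
\end{align*}
with $C$ depending on $B$ and $\|v\|_\infty$ (finite since $v\in C_b^1$). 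For the factor $|\hat a_i + a_i|$, the same kernel-boundedness estimate together with the assumed boundedness of $\|\hat s\|$ in RKHS norm and $\|\partial_y v\|_\infty < \infty$ yields a deterministic uniform bound. Therefore $|\hat{\mathbf{L}}_n(v)-\tilde{\mathbf{L}}_n(v)| = \mathcal{O}_p(n^{-1/\alpha})$, and the theorem follows.

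The main obstacle, and the only step that requires care, is the uniform-in-$i$ pointwise control of the score error: one needs that the RKHS containing the joint score is rich enough for both the true and estimated scores, and that the boundedness of the tensor-product kernel (rather than just each factor) gives uniform evaluation bounds on the data. Once this is secured, the rest is algebraic manipulation of the squared difference plus a standard sample-mean argument for the Monte Carlo error; no delicate empirical process machinery is needed because $v$ is fixed, not optimized over a class.
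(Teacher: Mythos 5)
Your proof is correct, but it takes a genuinely different route from the paper's. You recenter at the empirical loss evaluated with the \emph{true} scores, so that the Monte Carlo term $\tilde{\bfL}_n(v) - \bfL(v)$ is a plain sample mean of a single fixed bounded function and Chebyshev gives $\calO_p(n^{-1/2})$ with no empirical-process machinery; the score-estimation term is then controlled by the difference-of-squares factorization together with the sup-norm bound $\|f\|_\infty \le B^{1/2}\|f\|_{\calH}$ that the reproducing property and the bounded kernels provide, which converts the assumed RKHS-norm rates directly into a uniform-in-$i$ pointwise rate (and this is exactly the right tool for handling the data-dependence of $\hat s$, since the bound holds at every point simultaneously). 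The paper instead splits off $\frac{1}{n}\sum_i f(\hat s, Z_i) - \bbE_Z f(\hat s, Z)$, which is \emph{not} a sum of functions independent of the data because $\hat s$ is fitted on the same sample; it therefore takes a supremum over the RKHS ball $\calB(\calH,M)$ containing the estimators, verifies a Carath\'{e}odory/separability condition, and bounds the resulting Rademacher average term by term, then handles the remaining bias $\bbE_Z f(\hat s, Z) - \bbE f(s,Z)$ via Cauchy--Schwarz in $L_2$ and the embedding $\|f\|_{L_2} \le \|f\|_{\calH_k}$. Your argument is shorter and more elementary, and it reaches the same rate; the paper's uniform-over-the-ball setup is the kind of machinery one would want if the statement were strengthened to hold uniformly over a class of velocities or to control the minimizer of $\hat{\bfL}_n$ rather than its value at a fixed $v$, but for the theorem as stated your decomposition suffices. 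The one hypothesis you should make sure to invoke explicitly is the assumed deterministic bound on the RKHS norms of the estimators themselves (not just of their errors), which is what makes $\sup_i|\hat a_i + a_i|$ bounded by a constant.
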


\begin{proof}
    For convenience in this proof, we will use the notation $s_1 := s_x, s_2 := -s_{x,y}^{(x)}, s_3:= -s_{x,y}^{(y)}$, $h_1 := -\frac{\partial v}{\partial y}$, $h_2 = v$, and $Z = (X,Y) \sim p_{x,y}$. Note that by assumption, we have that $s = (s_1,s_2,s_3) \in \calH$, where $\calH = \calH_1 \otimes \calH_2 \otimes \calH_3$ is a tensor product reproducing kernel Hilbert space with components $\calH_1 := \cal H_{\cal X}$, $\calH_2 = \calH_3 := \calH_{\calX,\calY}$.
    Using this notation, we can express \cref{eqn:loss} and its estimator \cref{eqn:loss_est} as
        \begin{align}
     {\bfL}(v) & = \bbE f(s, Z) \\
    \hat {\bfL}_n(v) & = \frac{1}{n} \sum_{i=1}^{n} f(\hat s, Z_i)
    \end{align}
    where $f( s, Z) =  \left(s_1(Z) +h_1(Z) + s_2(Z) + h_2(Z) s_3(Z)\right)^2$ and $\calD_n : = \{Z_i\}_{i=1}^{n} = \{X_i,Y_i\}_{i=1}^{n} \overset{iid}{\sim}p_{X,Y}$ is an i.i.d. dataset, and $\hat s = (\hat s_1, \hat s_2, \hat s_3)$ are the estimated scores using $\calD_n$. We can expand the deviation between the estimated and true loss into two bounding terms
    \begin{align}
   |\hat \bfL_n(v) -   \bfL(v)| & \leq \left|\frac{1}{n} \sum_{i=1}^n f( \hat s, Z_i) - \bbE_Z f( \hat s, Z)\right| + \left|\bbE_Z f( \hat s, Z) - \bbE f( s, Z)\right| \\
    & \leq \sup_{s \in \calB(\calH,M)}\left|\frac{1}{n} \sum_{i=1}^n f( s, Z_i) - \bbE f(s, Z)\right| + \left|\bbE_Z f( \hat s, Z) - \bbE f( s, Z)\right| \label{eq:loss_split}
    \end{align}
    where for clarity $\calB(\calH,M) =  \calB(\calH_1,M) \cup \calB(\calH_2,M)\cup \calB(\calH_3,M) \subset \calH$, $f( s, z)  =    \left(s_1(z) +h_1(z) + s_2(z) + h_2(z) s_3(z)\right)^2$, and $Z$ is an iid copy.

    To analyze the LHS term, we first show that the function class  $\{f(s,\cdot) : s \in \calB(\calH,M)$ \} is a \emph{separable and complete Carath\'{e}odory family}. This requires that (i) $\calB(\calH,M)$ is a separable, complete metric space, and (ii) $s \mapsto f(s,z)$ is continuous for every $z \in \calZ$ \cite{steinwart2008support}. (i) Follows immediately from the properties of RKHS's and the fact that $\calB(\calH,M)$ is a closed ball in this space. (ii) Follows from the fact that  $s_1, s_2, s_3, h_1, h_2$ are all bounded and continuous (note the boundedness of $s_1,s_2,s_3$ follows from the boundedness of the kernels $k_x,k_y$). This property also means that $\calF := \{f(s,\cdot) : s \in \calB(\calH,M)\} \in L_\infty(\calZ)$ and $\sup_{s \in \calB(\calH,M)} \lVert f(s,\cdot)\rVert_{\infty}$. As a result, by Proposition 7.10 in \citet{steinwart2008support} we have
    \begin{align}
        \bbE \sup_{s \in \calB(\calH,M)} \left| \bbE f(s,Z) - \frac{1}{n}\sum_{i=1}^n f(s,Z_i)\right| \leq 2 \bbE \Rad_{\calD_n}(\calF, n) \label{eq:rad}
    \end{align}

    Where $\Rad_{\calD_n}(\calF, n) = \bbE_{ \sigma} \sup_{s \in \calB(\calH,M)} \left| \frac{1}{n}\sum_{i=1}^n \sigma_i f(s,Z_i)\right|$ is the empirical Rademacher average (i.e., $\{\sigma_i\}_{i=1}^n \overset{iid}{\sim} \Rad(1/2)$). Now, note that we can expand $f$ as 
    \begin{align}
       f(s,Z) & =  (s_1(Z)^2 + s_2(Z)^2 + h_2(Z)^2 s_3(Z)^2 + 2s_1(Z)s_2(Z) + 2s_1(Z)s_3(Z)h_2(Z)+2s_2(Z)s_3(Z)h_2(Z) \nonumber \\
           & \qquad + 2(s_1(Z) + s_2(Z)+ s_3(Z)h_2(Z))h_1(Z) + h_1(Z)^2)
    \end{align}
    If we substitute in this definition of $f$ into \eqref{eq:rad}, we get the inequality.
    \begin{align}
      \Rad_{\calD_n}(\calF, n) & \leq   \Rad_{\calD_n}(\calB_{\calH_1^2}, n) +  \Rad_{\calD_n}(\calB_{\calH_2^2},n) +  \Rad_{\calD_n}(\{h_2\}\otimes \calB_{\calH_3^2},n)  + 2 \Rad_{\calD_n}(\calB_{\calH_1}\otimes \calB_{\calH_2}, n)  \nonumber \\
    & \qquad + 2 \Rad_{\calD_n}(\calB_{\calH_1}\otimes \calB_{\calH_3} \otimes \{h_2\}, n) + 2 \Rad_{\calD_n}(\calB_{\calH_2}\otimes \calB_{\calH_3} \otimes \{h_2\}, n) + 2 \Rad_{\calD_n}(\calB_{\calH_1} \otimes \{h_1\}, n) \nonumber \\
    & \qquad + 2 \Rad_{\calD_n}(\calB_{\calH_2} \otimes \{h_1\}, n) + 2 \Rad_{\calD_n}(\calB_{\calH_3} \otimes \{h_1\}\otimes \{h_2\}, n) + \Rad_{\calD_n}(\{h_1\} \otimes \{h_1\},n) \label{eq:rad_decomp}
    \end{align}
    Where in the above we use the shorthand $\calB_{\cal H} = \calB(\calH,M)$. The tensor product spaces with singletons satisfy the property that if $\phi \in \calG \otimes \{h\}$ where $\calG$ is a space of functions and $\{h\}$ is a singleton, then there is some $g \in \calG$ such that $\phi(z) = g(z)h(z),\ \forall z \in \calZ$. Note that the above inequality follows from the fact that, for any two function classes $\calG_1, \calG_2$, we have by the triangle inequality, 
    \begin{align}
        \sup_{(g_1,g_2) \in \calG_1 \times \calG_2} \left| \frac{1}{n}\sum_{i=1}^n \sigma_i (g_1(Z_i)+g_2(Z_i))\right| \leq  \sup_{g_1 \in \calG_1} \left| \frac{1}{n}\sum_{i=1}^n \sigma_i g_1(Z_i)\right| +  \sup_{g_2 \in \calG_2} \left| \frac{1}{n}\sum_{i=1}^n \sigma_i g_2(Z_i)\right|
    \end{align}

   Now, note that every term in \eqref{eq:rad_decomp} is the Rademacher average of a closed ball in an RKHS of bounded functions. For the terms involving tensor-product spaces without singletons like $\{h_1\}$ this is by true by definition (since we work with tensor products of bounded RKHS's). To briefly show that this holds for the terms involving tensor product spaces with singletons (e.g. $\calH_1 \otimes \{h_1\}$), note in general that if $\calH_k$ is an RKHS with bounded kernel $k$, then so is $\calH_k \otimes \{h_1\}$, if $h_1 \in C^0(\calZ,\bbR)$, because the evaluation functional $\delta_z : h \otimes h_1 \mapsto h(z)h_1(z)$ remains continuous: $\delta_z(hh_1)  = \langle h,k(z,\cdot)\rangle_{\calH_k}h_1(z) \leq \lVert h_1 \rVert_{\infty}k(z,z) \lVert h \rVert_{\calH_k} \leq \tilde B\lVert h \rVert_{\calH_k}, \tilde B > 0$. Similarly, if $\lVert h \rVert_{\calH_{k}} \leq M$, then $\lVert hh_1\rVert_{\calH_k} \leq M$ also. This is because one can simply choose the norm of $\calH_k \otimes \{h_1\}$ as the norm of $\calH_k$, due to the boundedness of $h_1$.
   
   Now, by Lemma 22 in \citet{bartlett2002rademacher}, we have $\Rad_{\calD_n}(\calB(\calH_k,M),n) < \frac{C}{\sqrt{n}}$ for some $C>0$, where $\calB(\calH_k,M) = \{h \in \calH_k : \|h\|_{\calH_k} \leq M\}$ and $\calH_k$ is an RKHS with bounded kernel $k$. This means that 
    \begin{align}
        \bbE \sup_{s \in \calB(\calH,M)} \left| \bbE f(s,Z) - \frac{1}{n}\sum_{i=1}^n f(s,Z_i)\right| \leq 2 \bbE \Rad_{\calD_n}(\calF, n) \leq 2 D/\sqrt{n} \label{eq:rad_final}
    \end{align}
for an appropriate constant $D>0$, which gives us the desired result for the first term of \cref{eq:loss_split}.

Now for the second term. For this section, all expectations are taken over $Z$ only (i.e. not $\hat s$). To start, we can expand out $f$.
\begin{align}
    \bbE f(s, Z) & =     \bbE
   [s_1(Z)^2 + s_2(Z)^2 + h_2(Z)^2 s_3(Z)^2 + 2s_1(Z)s_2(Z) + 2s_1(Z)s_3(Z)h_2(Z)+2s_2(Z)s_3(Z)h_2(Z) \nonumber \\
    & \qquad + 2(s_1(Z) + s_2(Z)+ s_3(Z)h_2(Z))h_1(Z) + h_1(Z)^2]
\end{align}
Taking differences with the same quantity at $\hat s$ gives
\begin{align}
    \bbE f(s, Z) -  \bbE f(\hat s, Z) & = \bbE
   (s_1(Z)^2 - \hat s_1(Z)^2 + \bbE (s_2(Z)^2 - \hat s_2(Z)^2) +  \bbE h_2(Z)^2 (s_3(Z)^2 - \hat s_3(Z)^2) \\
     +2 \bbE[s_1(Z)s_2(Z) - \hat s_1(Z)\hat s_2(Z)] & + 2\bbE [(s_1(Z)s_3(Z)-\hat s_1(Z)\hat s_3(Z))h_2(Z)]+2\bbE [(s_2(Z)s_3(Z)-\hat s_2(Z)\hat s_3(Z))h_2(Z)] \nonumber \\
     \qquad + 2\bbE [(s_1(Z) - \hat s_1(Z))h_1(Z)] & + 2\bbE [(s_2(Z) - \hat s_2(Z))h_1(Z)] + 2\bbE [(s_3(Z) - \hat s_3(Z))h_1(Z)]
\end{align}

Since  $v \in C^1(\bbR^2, \bbR)$, we know $h_1 < A_1, h_2 < A_2$ are bounded, where $A_1, A_2 > 0$ are constants. Using Jensen's inequality we therefore have
\begin{align}
    |\bbE f(s, Z) -  \bbE f(\hat s, Z)| & \leq \bbE
   |s_1(Z)^2 - \hat s_1(Z)^2| + \bbE |s_2(Z)^2 - \hat s_2(Z)^2| +  A_1\bbE|s_3(Z)^2 - \hat s_3(Z)^2| \nonumber \\
     +2 \bbE[s_1(Z)s_2(Z) - \hat s_1(Z)\hat s_2(Z)] & + 2A_2\bbE [|s_1(Z)s_3(Z)-\hat s_1(Z)\hat s_3(Z)|]+2A_2\bbE [(s_2(Z)s_3(Z)-\hat s_2(Z)\hat s_3(Z))] \nonumber \\
     \qquad + 2A_1\bbE [(s_1(Z) - \hat s_1(Z)) & + 2A_1\bbE [|s_2(Z) - \hat s_2(Z)] + 2A_1\bbE [(|s_3(Z) - \hat s_3(Z)|]
    \end{align}
    
Which can be simplified using a change of notation to
\begin{align}
  |\bbE f(s, Z) -  \bbE f(\hat s, Z)|   & \leq \lVert s_1^2 - \hat s_1^2 \rVert_{L_1} + \lVert s_2^2 - \hat s_2^2 \rVert_{L_1} +  \lVert s_3^2 - \hat s_3^2 \rVert_{L_1} \\
     &+2 \lVert s_1s_2 - \hat s_1\hat s_2\rVert _{L_1} + 2A_2\lVert s_1s_3-\hat s_1\hat s_3\rVert_{L_1}+2A_2 \lVert s_2s_3-\hat s_2\hat s_3\rVert_{L_1} \nonumber \\
     & \qquad + 2A_1\lVert s_1 - \hat s_1 \rVert_{L_1} + 2A_1\lVert s_2- \hat s_2 \rVert_{L_1} + 2A_1\lVert s_3 - \hat s_2 \rVert_{L_1}
\end{align}
where we define $s_i^2: z \mapsto s_i(z)^2$ and $(s_is_j): z \mapsto s_i(z)s_j(z)$. There are two kinds of summands above: (i) $\lVert s_i - \hat s_i \rVert_{L_1}$ and (ii) $\lVert s_is_j - \hat s_i\hat s_j \rVert_{L_1}$ for $i,j \in \{1,2,3\}$. All that remains is to bound each of these terms by sums of terms like $\lVert s_i - \hat s_i \rVert_{L_2}$. This is immediate for (i) by the properties of $L_p$ norms. To show (ii) we can simply use the triangle inequality and Cauchy Schwartz:
\begin{align}
    \lVert s_is_j - \hat s_i \hat s_j \rVert_{L_1} &  = \lVert s_i(s_j - \hat s_j) + s_j(\hat s_i - s_i)\rVert_{L_1} \\
    & \leq  \bbE|s_i(Z)|| s_j(Z) - \hat s_j(Z)| + \bbE|s_j(Z)||(\hat s_i(Z) - s_i(Z))| \\
    & \leq \lVert s_i \rVert_{L_2} \lVert s_j - \hat s_j \rVert_{L_2} + \lVert s_j \rVert_{L_2} \lVert s_i - \hat s_i \rVert_{L_2} \\
    & \leq A_S(\lVert s_j - \hat s_j \rVert_{L_2} + \lVert s_i - \hat s_i \rVert_{L_2})
\end{align}

Now, note that for any bounded positive-definite kernel $k: \calZ^2 \to \bbR$ with associated RKHS $\calH_k$, if $k < B$ we have $\|f\|_{L_2} \leq \|f\|_{\calH_k}$, because $|f(z)|^2 = \langle k(z,\cdot), f \rangle _{\calH_k}^2 \leq |k(z,z)| \|f\|_{\calH_k}^2 \leq B$ . This means that
\begin{align}
    |\bbE f(s, Z) -  \bbE f(\hat s, Z)| = \calO \left(\sum_{i=1}^3 \lVert s_i - \hat s_i \rVert _{\calH_i}\right) = \calO_p(n^{-\frac{1}{\alpha}}).
\end{align}
Where the last equality follows from the convergence assumption of the score estimators in the theorem. Combining this result with the convergence result in \cref{eq:rad_final} for the first term of \cref{eq:loss_split} completes the proof.

\end{proof}

\section{Identifiability of Causal Direction with Bivariate Velocity Models}
\label{appx:id}

In order for the causal direction to be identifiable, it cannot be the case that $p(x,y)$ can be expressed in terms of a SCM from the same class in both causal directions. For the model in the $Y \to X$ direction, let $\tv(x,y)$ denote the velocity. \cref{thm:id} can be used to determine conditions under which the causal direction cannot be identified.

For convenience, let $\pi$ denote $\log p$, for example $\pi(x,y) = \log p(x,y)$, $\pi(y|x) = \log p(y|x)$, and so on. 
Starting with \eqref{eq:joint:id}, taking a partial derivative in $y$ yields
\begin{align} \label{eq:id:forward}
    \partial_y \partial_x \pi(x,y)  = -\partial^2_y v(y,x) - \partial_y v(y,x) \partial_y \pi(x,y) - v(y,x)\partial^2_y \pi(x,y) \;.
\end{align}
Similarly, in other model direction, \eqref{eq:joint:id} becomes $\partial_y \pi(x,y) = -\partial_x \tv(x,y) - \tv(x,y) \partial_x \pi(x,y) + \partial_y \pi(y)$, so taking a derivative with respect to $x$, we have
\begin{align} \label{eq:id:backward}
    \partial_x \partial_y \pi(x,y)  = -\partial^2_x \tv(x,y) - \partial_x \tv(x,y) \partial_x \pi(x,y) - \tv(x,y)\partial^2_x \pi(x,y) \;.
\end{align}
Equating them, we find that the direction is not identifiable if and only if
\begin{align} \label{eq:id:main}
    \partial^2_y v(y,x) + \partial_y v(y,x) \partial_y \pi(x,y) + v(y,x)\partial^2_y \pi(x,y) 
    =
    \partial^2_x \tv(x,y) + \partial_x \tv(x,y) \partial_x \pi(x,y) + \tv(x,y)\partial^2_x \pi(x,y) \;.
\end{align}
This proves \cref{prop:velocity:id}. 

To use this, we write the various partial derivatives of $\pi(x,y)$ in terms of the forward model. 
For a SCM with velocity $v$ and flow $\varphi$, so that the SCM is $y = \varphi_{x_0,x}(\epsilon_y)$, the log joint density can be written 
\begin{align*}
    \pi(x,y) = \pi(x) + \pi_0(\varphi^{-1}_{x_0,x}(y)) + \log | \partial_y \varphi^{-1}_{x_0,x}(y) | = \pi(x) + \pi_0(\varphi_{x,x_0}(y)) + \log | \partial_y \varphi_{x,x_0}(y) | \;,
\end{align*}
and therefore,
\begin{align}
    \partial_y \pi(x,y) & = \dot{\pi}_0(\varphi_{x,x_0}(y)) \partial_y \varphi_{x,x_0}(y) + \partial_y \log | \partial_y \varphi_{x,x_0}(y) | \\
    \partial_x \pi(x,y) & = \dot{\pi}(x) + \dot{\pi}_0(\varphi_{x,x_0}(y)) \partial_x \varphi_{x,x_0}(y) + \partial_x \log | \partial_y \varphi_{x,x_0}(y) | \\
    & =
    \dot{\pi}(x) -  \dot{\pi}_0(\varphi_{x,x_0}(y)) v(y,x) \partial_y \varphi_{x,x_0}(y) - \partial_y v(y,x) \\
    \partial^2_y \pi(x,y) & = \ddot{\pi}_0(\varphi_{x,x_0}(y))(\partial_y \varphi_{x,x_0}(y))^2 + \dot{\pi}_0(\varphi_{x,x_0}(y)) \partial^2_y \varphi_{x,x_0}(y) + \partial^2_y \log | \partial_y \varphi_{x,x_0}(y) |
    \\
    \partial^2_x \pi(x,y) & = \ddot{\pi}(x) + \ddot{\pi}_0(\varphi_{x,x_0}(y))(\partial_x  \varphi_{x,x_0}(y))^2 + \dot{\pi}_0(\varphi_{x,x_0}(y)) \partial^2_x \varphi_{x,x_0}(y) - \partial_x\partial_y v(y,x)
    \\
    \partial_x \partial_y \pi(x,y) & = \ddot{\pi}_0(\varphi_{x,x_0}(y)) (\partial_y \varphi_{x,x_0}(y)) (\partial_x \varphi_{x,x_0}(y)) + \dot{\pi}_0(\varphi_{x,x_0}(y)) (\partial_x \partial_y \varphi_{x,x_0}(y)) + \partial_y \partial_x \log | \partial_y \varphi_{x,x_0}(y) | \\
    & = - \ddot{\pi}_0(\varphi_{x,x_0}(y)) v(y,x) (\partial_y \varphi_{x,x_0}(y))^2 - \dot{\pi}_0(\varphi_{x,x_0}(y)) \partial_y ( v(y,x) \partial_y\varphi_{x,x_0}(y)) - \partial^2_y v(y,x)
\end{align}
We have used the identities $\partial_x \varphi_{x,x_0}(y) = -v(y,x) \partial_y \varphi_{x,x_0}(y)$ and $\partial_x \log | \partial_y \varphi_{x,x_0}(y) | = -\partial_y v(y,x)$ to simplify some of the expressions. 
Substituting these into \eqref{eq:id:main}, if we view $\pi_0, v, \varphi$, and $\pi(x)$ as given (specified by nature), the result is a PDE for the reverse model velocity, $\tv$.  
Alternatively, as is common in the literature \citep{peters2014identifiability}, if we allow $\pi(x)$ to vary then we might manipulate some combination of \cref{eq:joint:id,eq:id:forward,eq:id:backward,eq:id:main} to obtain a differential equation for $\pi(x)$ in terms of only the fixed forward model. 

A more thorough general analysis of identifiability is beyond the scope of this work, though we analyze below the special cases of ANMs and LSNMs. In doing so, we obtain a new characterization of non-identifiability that holds uniformly over the model class. However, even in that somewhat simple extension of ANMs, the characterizing equation is much more complicated and does not yield an easy interpretation.

\subsection{Additive noise models}
\label{appx:anm:id}

For ANMs, write the models as $Y = m(X) + \epsilon_y$ with $\epsilon_y \sim p_0$, and $X = \tm(Y) + \epsilon_x$ with $\epsilon_x \sim \tp_0$. As shown in the main text, $v(y,x) = \dot{m}(x)$. Putting this into \eqref{eq:id:main} yields
\begin{align*}
    \dot{m}(x) \partial^2_y \pi(x,y) = \dot{\tm}(y) \partial^2_x \pi(x,y) \;.
\end{align*}
Hence, the structural functions are mutually constrained, with $\dot{\tm}(y)$ satisfying
\begin{align} \label{eq:anm:constraint}
    \dot{\tm}(y) = \dot{m}(x) \frac{\partial^2_y \pi(x,y)}{\partial^2_x \pi(x,y)} \;.
\end{align}
We observe that in the special case of linear ANMs, $\dot{m(x)} = a$ and $\dtm(y) = b$, so that \eqref{eq:anm:constraint} implies that each of $\partial^2_y \pi(x,y) = \partial^2_y \pi(y|x)$ and $\partial^2_x \pi(x,y) = \partial^2_x \pi(x|y)$ must be constant and therefore Gaussian, i.e., $b/a = \sigma_x^2/\sigma_y^2$. We also see that if $\pi(x,y)$ is jointly Gaussian then the ANM is not identifiable if and only if $\dm(x)/\dtm(y)$ is a constant, i.e., each of the model directions is linear. 

Continuing from \eqref{eq:anm:constraint} by taking another partial derivative in $x$, we find that
\begin{align*}
    \partial_x \frac{\dot{m}(x)\partial^2_y \pi(x,y)}{\partial^2_x \pi(x,y)} = 0 \;.
\end{align*}
Moreover, 
\begin{align*}
    \pi(x,y) = \pi(x) + \pi_0(y - m(x)) \;,
\end{align*}
so that
\begin{gather*}
    \partial_y \pi(x,y) = \dot{\pi}_0(y-m(x)) \\
    \partial^2_y \pi(x,y) = \ddot{\pi}_0(y-m(x)) \\
    \partial_x \pi(x,y) = \dot{\pi}(x) - \dot{\pi}_0(y-m(x))\dot{m}(x) \\
    \partial^2_x \pi(x,y) = \ddot{\pi}(x) + \ddot{\pi}_0(y-m(x)) (\dot{m}(x))^2 - \dot{\pi}_0(y-m(x))\ddot{m}(x) 
\end{gather*}
Carrying out the algebra, we find
\begin{align} \label{eq:anm:id:de}
    \dddot{\pi}(x) =\ddot{\pi}(x) G(x,y) + H(x,y) \;,
\end{align}
where
\begin{align*}
    G(x,y) & = \frac{\ddot{m}(x)}{\dot{m}(x)} - \frac{\dot{m}(x) \dddot{\pi}_0(y-m(x))}{\ddot{\pi}_0(y-m(x))}\\
    H(x,y) & = -2\ddot{\pi}_0(y - m(x)) \ddot{m}(x) \dot{m}(x) + \dot{\pi}_0(y-m(x)) \dddot{m}(x) \\ 
    & \quad + \frac{\dot{\pi}_0(y-m(x)) \dddot{\pi}_0(y-m(x)) \ddot{m}(x) \dot{m}(x)}{\ddot{\pi}_0(y-m(x))} - \frac{\dot{\pi}_0(y-m(x)) (\ddot{m}(x))^2}{\dot{m}(x)} \;.
\end{align*}
This is the same differential equation obtained by \citet{hoyer2008nonlinear} in their analysis of identifiability using ANM models. 

Alternatively, Eq.\ (6) from \citep{hoyer2008nonlinear}, which also leads to the identifying differential equation \eqref{eq:anm:id:de}, can be obtained directly from the continuity equation \eqref{eq:joint:id} for the backward model $Y \to X$. In that case, \eqref{eq:id:backward} yields
\begin{align*}
    \partial_x \partial_y \pi(x,y) = -\dot{\tm}(y) \partial^2_x \pi(x,y).
\end{align*}
Hence, solving for $1/\dot{\tm}(y)$ and differentiating with respect to $x$,
\begin{align*}
    \partial_x \left( \frac{\partial_x \partial_y \pi(x,y)}{\partial^2_x \pi(x,y)} \right) = 0 \;,
\end{align*}
which is Eq.\ (6) in \citep{hoyer2008nonlinear}, from which the rest of their identifiability results on ANMs follow. 

\subsection{Location-Scale Noise Models}
\label{appx:lsnm:id}

For LSNMs with $y = m(x) + e^{h(x)} \epsilon_y$ and $x = \tm(y) + e^{\th(y)}\epsilon_x$, we have that $v(y,x) = \dm(x) + \dh(x)(y - m(x))$, and similarly $\tv(x,y) = \dtm(y) + \dth(y)(x - \tm(y))$. Therefore, \eqref{eq:id:backward} yields
\begin{align*}
    - \partial_x\partial_y \pi(x,y) & = \dth(y)\partial_x \pi(x,y) + (\dtm(y) + \dth(y)(x - \tm(y)))\partial^2_x \pi(x,y) \;. %
\end{align*}
Dividing by $\partial^2_x \pi(x,y)$ and differentiating with respect to $x$,
\begin{align*}
    - \partial_x \left(\frac{\partial_x\partial_y \pi(x,y)}{\partial^2_x \pi(x,y)}\right) = \dth(y)\left(1 + \partial_x \left( \frac{\partial_x \pi(x,y)}{\partial^2_x \pi(x,y)} \right) \right) \;.
\end{align*}
Therefore,
\begin{align*}
    \partial_x \left[ \partial_x \left(\frac{\partial_x\partial_y \pi(x,y)}{\partial^2_x \pi(x,y)}\right) 
    \bigg/ 
    \left(1 + \partial_x \left( \frac{\partial_x \pi(x,y)}{\partial^2_x \pi(x,y)} \right) \right) \right]
    = 0\;.
\end{align*}
Carrying out the differentiation and simplifying yields (suppressing the $(x,y)$ arguments of $\pi$),
\begin{align*}
    [3(\partial^3_x \pi)^2 - 2(\partial^2_x \pi)(\partial^4_x \pi)]
    (\partial_x \partial_y \pi)
    +
    [(\partial_x \pi)(\partial^4_x \pi) - 3(\partial^2_x \pi)(\partial^3_x \pi)]
    (\partial^2_x \partial_y \pi) 
    +
    [2 (\partial^2_x \pi)  -
    (\partial_x \pi)(\partial^3_x \pi)]
    (\partial^3_x \partial_y \pi)
    = 0 \;.
\end{align*}
Since by using the forward model we have
\begin{align*}
    \pi(x,y) = \pi(x) + \pi_0\left( e^{-h(x)}(y - m(x)) \right) - \dot{h}(x) \;,
\end{align*}
in principle this can be solved to characterize, for fixed $\pi_0, h, m$ the set of $\pi(x)$ for which the causal direction is not identifiable. 

Alternatively, substituting
\begin{align*}
    \partial_x \partial_y \pi(x,y) = \dh(x) \partial_y \pi(x,y) + (\dm(x) + \dh(x)(y - m(x)))\partial^2_y \pi(x,y) 
\end{align*}
yields a different but equivalent equation. 

\section{Experiment and Simulation Details}
\label{appx:experiment}

We describe extra details of the experiment and simulation here. All experiments are conducted using the JAX library \citep{jax2018github}.

\subsection{Synthetic Data Generation}

\label{appx:experiment:syndetails}

\begin{table}
    \centering
    \caption{Parameter settings for synthetic benchmarks.}
    \begin{tabular}{c|c|c|c|c}
        Benchmark & $\theta$ & $f_{\theta}(x, \epsilon_y)$ & $\sigma_\theta$  & $\sigma_y$  \\ \midrule  
         Velocity& $\theta \in \bbR^6$ & $
         \begin{gathered}[t]  \epsilon_y + \smallint_{x_0}^{x} \theta^\top \Phi(y(u), u) du, \\
         \Phi(y, u)^\top = \begin{bmatrix}
             1 \\ \sin(u) \\ \sin(y) \\ \cos(u) \\ \cos(y) \\ \sin(x+y)
         \end{bmatrix}
         \end{gathered}$
         & 1 & 1 \\
         Sigmoid& $\theta_a, \theta_b, \theta_c, \theta_d$: 2x64 MLP weights & $c(x)+ e^{-d(x)^2} \Phi^{-1}(\text{sigmoid}(a(x) + e^{-b(x)^2} \epsilon_y))$  & 0.2  &  3 \\
         ANM & $\theta_m$ 3x64 MLP weights  & $m(x) + \epsilon_y$  & 0.2 & 0.2 \\
         LSNM& $\theta_m$, $\theta_h$ 2x64 MLP weights  & $m(x) + (e^{-h(x)^2} + 0.2) \epsilon_y$ & 0.2 & 0.2 \\
    \end{tabular}
    \label{tab:synthetic_params}
\end{table}

All synthetic benchmarks can be described as generating from an SCM $Y = f_{\theta}(X, \epsilon_y)$. In each case 100 samples are drawn from $\theta \sim \calN(0, \sigma_\theta^2)$ to generate the 100 datasets. The noise distributions $X, \epsilon_y$ are generated as follows.
\begin{align}
    \xi_x, \xi_y \sim \calN(0, I_{2}), \\
    X = T_{\theta_t}(\xi_x), \epsilon_y = \sigma_y T_{\theta_t},(\xi_y) 
\end{align}
where $\sigma_y$ is a noise scaling parameter that represents the signal-to-noise ratio (larger $\sigma_y$ indicates noisier data). See \cref{tab:synthetic_params} for specifics in each benchmark. $T_{\theta_t}$ are randomly sampled (for each dataset in the benchmark) triangular monotonic increasing (TMI) maps parametrized as follows:
\begin{align}
    T_{\theta_t}(x) =\int_{0}^x \text{softplus}(f_{\theta_t}(x))dx, 
\end{align}
and $\theta_t \sim \calN(0, 0.3^2)$ are 3x64 MLP parameters. 

\begin{figure}
    \centering
    \includegraphics[width=1\linewidth]{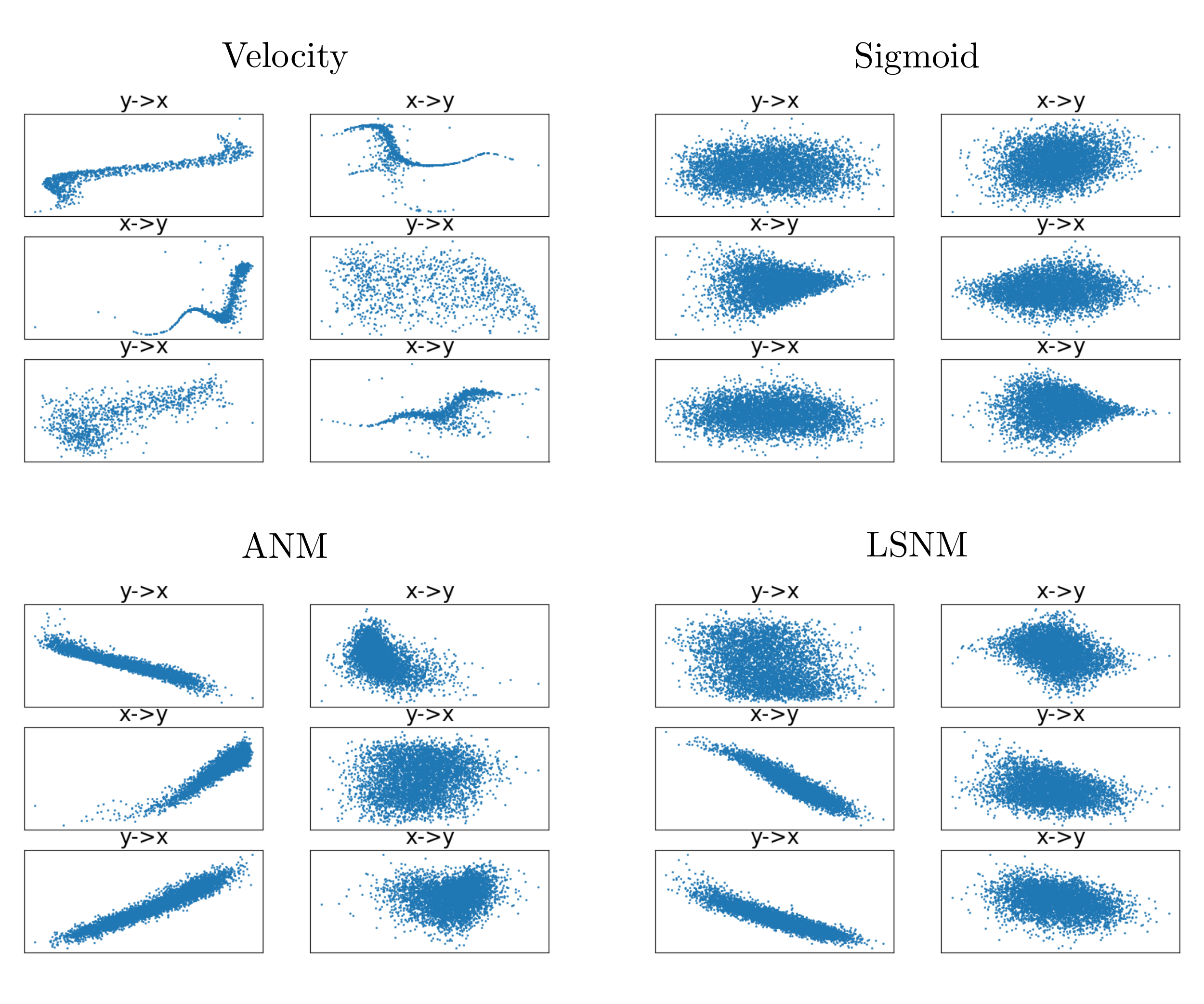}
    \caption{Example datasets from each synthetic benchmark.}
    \label{fig:example_datasets}
\end{figure}

\paragraph{Analytic Score Calculation}

For \cref{sec:sample_size}, we additionally designed ANM and LSNM datasets with Gaussian noise variables, $X \sim \calN(0,1), \epsilon_y \sim \calN(0, \sigma_y^2)$. Other settings are as in \cref{tab:synthetic_params}. The Gaussian noise variables allow for an analytic score calculation of the marginal and joint score functions, as follows:
\begin{align}
    s_x(x) = s_{\text{gaussian}}(x) = -x.
\end{align}
Denote the mechanism by $f_{x}(\epsilon_y)$. We have
\begin{align}
    s_{x}(x,y) & = \partial_x \log p(x,y) = \partial_x  \log p(x) + \partial_x \log p(y \mid x) \\
    & = -x + \partial_x \log p_{\epsilon_y}(f_x^{-1}(y)) + \partial_x(\log \partial_y f_{x}^{-1}(y)) \\
    & = -x + f_x^{-1}(y) \partial_x f_{x}^{-1}(y) + \frac{\partial_{xy} f_{x}^{-1}(y)}{\partial_y f_{x}^{-1}(y)},
\end{align}
and 
\begin{align}
    s_{y}(x,y) & = \partial_y \log p(x,y) = \partial_y  \log p(x) + \partial_y \log p(y \mid x) \\
    & = \partial_y \log p_{\epsilon_y}(f_x^{-1}(y)) + \partial_y(\log \partial_y f_{x}^{-1}(y)) \\
    & = f_x^{-1}(y) \partial_y f_{x}^{-1}(y) +\frac{\partial_{yy} f_{x}^{-1}(y)}{\partial_y f_{x}^{-1}(y)},
\end{align}
where the derivatives are calculated with automatic differentiation. Finally, the marginal density $y$ is given as
\begin{align}
    p(y) = \int_{x} p(y\mid x) p(x) dx,
\end{align}
and the score is approximated by Monte carlo approximation of the integral and automatic differentiation of the log-density.

\subsection{Velocity Parametrization and Training}

We distinguish two types of models: those where the velocity function is a linear combination of basis functions, and those where the velocity function are neural network based. 

For the basis models, we use the Adam optimizer with a base learning rate of $0.1$, scaled by a factor of $1/\log(\text{\# of parameters})$. 

The basis functions we use are as follows:
\begin{align}
    \Phi_{\text{lin}}(y,x) = \begin{bmatrix}
        1 & x & y
    \end{bmatrix}, \quad \Phi_{\text{quad}}(y,x) = \begin{bmatrix}
        1 & x & y & x^2 & y^2 & xy
    \end{bmatrix}.
\end{align}
For each of these, our experiments also include appending the following exponential terms:
\begin{align}
    \Phi_{\text{exp}}(y,x) = \begin{bmatrix}
        e^{-x^2} & e^{-y^2} & e^{-(x^2 + y^2)}
    \end{bmatrix}. 
\end{align}
The linear basis and quadratic basis models are hence parametrized by $K = 3, 6$ real-valued parameters, respectively. When the exponential terms are added, the parameter count is increased further by $3$. 

All neural networks involved are 3 layer fully connected MLPs with a hidden size of $64$, and $\tanh$ activation functions. For the ANM, we directly parametrize the velocity (i.e., $\dot m$) as an MLP, while for the LSNM, we parametrize the functions $m$ and $h$, and evaluate their derivatives using automatic differentiation to obtain the LSNM velocity (see \cref{tab:examples} for the specific form). 

\subsection{Score Estimation}

For the Stein score estimate, we use the suggested regularization parameter $\lambda = 0.1$ \citep{li2018gradient}. For the KDE, we use a regularization parameter of $\epsilon = n^{-2}$ as suggested in \citep{wibisono24a_score}. For the KDE estimator, we found that the standard Silverman rule of thumb \citep{silverman2018density} for obtaining the bandwidth works well, as the optimal bandwidths as proposed in \citep{wibisono24a_score} include unknown constants. 

\subsection{Benchmarks}

\label{appx:experiments:benchmarks}

For the benchmarks in \cref{tab:benchmark}, we also remove the data points corresponding to the most extreme 5\% of marginal values as an automated procedure, following score estimation. We found that this improved causal discovery performance especially for the real data in the Tübingen dataset. Furthermore, all other benchmarks have a sample size of $n = 1000$ besides the T\"{u}bingen dataset. To minimize hyperparameter search, all datasets are sub-sampled, or re-sampled if necessary, to a uniform size of $n=1000$. 

\paragraph{Continuous T\"{u}bingen} In addition to the datasets with binary variables (pairs 47, 70, 107) and multivariate settings (``pairs'' 52, 53, 54, 55, 71, and 105), which are filtered by default, we also identified 29 additional datasets with discrete variables that we expected score estimation to fail on, for example, the cause variable in pair 5 is integer-valued (\# of tree rings). The full set of removed pairs are as follows: (5,6,7,8,9,10,11,13,14,15,16,26,27,28,29,32,33,34,35,36,37,47,70,85,94,95,99,105,107). 

\section{Additional Figures and Results}

\label{appx:additional}

\subsection{Synthetic Results}

Here we present additional results on our synthetic dataset: \cref{tab:synthetic_kde} is the KDE counterpart to \cref{tab:synthetic} in the main text, and \cref{tab:synthetic_1000} contains results when subsampled to $n=1000$. 

\label{appx:experiments:additional_synthetic}

\begin{table}[h!]
    \centering
    \vspace{-8pt}
    \caption{KDE results on synthetic data ($n = 5000$).}
    \label{tab:synthetic_kde}
    \vspace{5pt}
\resizebox{0.45\textwidth}{!}{
\begin{tabular}{c|c|c|c|c}
    \toprule
    {\textbf{Model}} &  
    {Velocity} &
    {Sigmoid} &  
    {ANM} &
    {LSNM}  \\
    \midrule
     \texttt{B-LIN} & 68 (59) & 30 (27)  & 45 (29)  &  40 (32)\\
     \texttt{B-QUAD} & 65 (61) & 33 (29) & 30 (19) &   15 (5) \\
     \texttt{V-ANM} & 46 (33) &25 (15) &  31(18) &  39 (26)  \\
     \texttt{V-LSNM} & 65 (78)  & 47 (54) & 37 (33) & 38 (38))\\
     \texttt{V-NN} & 69 (65) & 56 (48) & 30 (24) &  36 (18)\\
     \bottomrule
\end{tabular}}
\vspace{-15pt}
\end{table}

\begin{table*}[h!]
    \centering
     \caption{Results on synthetic data (subsampled to $n=1000$).}
    \label{tab:synthetic_1000}
\begin{tabular}{c|cc|cc|cc|cc}
    \toprule
    {\textbf{Model}} & \multicolumn{2}{c}{Velocity} & \multicolumn{2}{c}{Sigmoid} & \multicolumn{2}{c}{ANM} & \multicolumn{2}{c}{LSNM} \\
    & \texttt{KDE} & \texttt{STEIN}  & \texttt{KDE} & \texttt{STEIN}  &  \texttt{KDE} & \texttt{STEIN} & \texttt{KDE} & \texttt{STEIN}
   \\
    \midrule
     \texttt{B-LIN} & 68 (59) & 87 (96) & 30 (27) & 72 (88) & 35 (22) & 39 (30) & 40 (32) & 49 (60) \\
     \texttt{B-QUAD} & 65 (57) & 88 (97) & 25 (28) & 67 (84) & 30 (19) & 49 (53) & 15 (5)  & 51 (58) \\
     \texttt{V-ANM} & 46 (33) &  81 (96)& 25 (15) & 40 (23) & 31 (18) & 77 (85) & 39 (26) & 52 (54) \\
     \texttt{V-LSNM} & 65 (78) & 87 (96) & 47 (54) & 69 (86) & 37 (32) & 73 (82) & 38 (38) & 66 (64) \\
     \texttt{V-NN} & 69 (65) & 90 (98) & 56 (49) & 58 (69) & 30 (24) & 58 (66) & 36 (18) & 48 (57) \\
     \midrule
     LOCI (HSIC) & \multicolumn{2}{c|}{40 (30)} & \multicolumn{2}{c|}{70 (83)} & \multicolumn{2}{c|}{91 (98)} & \multicolumn{2}{c }{ 69 (86)} \\
     LOCI (Lik) & \multicolumn{2}{c|}{46 (66)}& \multicolumn{2}{c|}{49 (61)}& \multicolumn{2}{c|}{ 44 (52)}& \multicolumn{2}{c }{31 (31)}\\
     CDS & \multicolumn{2}{c|}{46 (42)}& \multicolumn{2}{c|}{28 (13)}& \multicolumn{2}{c|}{90 (98)}& \multicolumn{2}{c }{48 (48)}\\
     IGCI & \multicolumn{2}{c|}{66 (76)}& \multicolumn{2}{c|}{53 (67)}& \multicolumn{2}{c|}{34 (22)}& \multicolumn{2}{c }{32 (20)}\\
     RECI & \multicolumn{2}{c|}{36 (26)}& \multicolumn{2}{c|}{18 (8)}& \multicolumn{2}{c|}{36 (34)}& \multicolumn{2}{c }{ 43 (56)}\\
     CGCI & \multicolumn{2}{c|}{48 (42)}& \multicolumn{2}{c|}{66 (71)}& \multicolumn{2}{c|}{71 (88)}& \multicolumn{2}{c }{55 (69)}\\
     \bottomrule
\end{tabular}
\vspace{-8pt}
\end{table*}

\subsection{Additional Benchmarks}

In \cref{tab:bench_tagasovska}, we present results on the benchmarks of \citet{tagasovska20aBQCD}. These are variants of ANM/LSNMs with Gaussian noise variables; the methods that assume Gaussian noise perform well in these settings. 

\label{appx:experiments:additional_benchmarks}

\begin{table*}[h!]
    \centering
     \caption{Results on the benchmark data of \citet{tagasovska20aBQCD}.}
    \label{tab:bench_tagasovska}
    \resizebox{\textwidth}{!}{
\begin{tabular}{c|cc|cc|cc|cc|cc}
    \toprule
    {\textbf{Model}} & \multicolumn{2}{c}{AN} & \multicolumn{2}{c}{AN-s} & \multicolumn{2}{c}{LS} & \multicolumn{2}{c}{LS-s} & \multicolumn{2}{c}{MNU} \\
    & \texttt{KDE} & \texttt{STEIN}  & \texttt{KDE} & \texttt{STEIN}  &  \texttt{KDE} & \texttt{STEIN} & \texttt{KDE} & \texttt{STEIN} & \texttt{KDE} & \texttt{STEIN}
   \\
    \midrule
     \texttt{B-LIN} & 76 (87) & 14 (5) & 97 (100) & 17 (5) &  63 (79) & 19 (25) & 84 (96) & 52 (53) & 68 (79) & 52 (51) \\
     \texttt{B-QUAD} & 98 (100) & 21 (8) & 98 (100) & 16 (4) & 89 (99) & 22 (25) & 92 (99) & 48 (53) & 96 (100) & 57 (71)\\
     \texttt{V-ANM} & 93 (99) & 68 (85) & 73 (91) & 59 (67) & 77 (93) & 63 (71) & 57 (75) & 53 (72) & 55 (59) & 10 (2) \\
     \texttt{V-LSNM} & 93 (98) & 44 (40) & 89 (98) & 31 (19) & 90 (99) & 69 (86) & 81 (92) & 62 (82) & 61 (68)  & 41 (40)  \\
     \texttt{V-NN} & 97 (100) & 22 (6) & 94 (100) & 33 (17) & 84 (97) & 19 (20) & 74 (92) & 63 (79) & 55 (53) & 53 (64) \\
     \midrule
     LOCI (HSIC) & \multicolumn{2}{c|}{100 (100)} & \multicolumn{2}{c|}{100 (100)} & \multicolumn{2}{c|}{95 (99)} & \multicolumn{2}{c|}{89 (97)} & \multicolumn{2}{c}{100(100)} \\
     LOCI (Lik) & \multicolumn{2}{c|}{100 (100)}& \multicolumn{2}{c|}{100 (100)}& \multicolumn{2}{c|}{100 (100)}& \multicolumn{2}{c|}{100 (100)} &\multicolumn{2}{c }{100 (100)}\\
     CDS & \multicolumn{2}{c|}{99 (100)}& \multicolumn{2}{c|}{92 (99)}& \multicolumn{2}{c|}{77 (87)}& \multicolumn{2}{c|}{ 6 (0)} &\multicolumn{2}{c }{ 66 (70)}\\
     IGCI & \multicolumn{2}{c|}{91 (98)}& \multicolumn{2}{c|}{95 (100)}& \multicolumn{2}{c|}{90 (98)}& \multicolumn{2}{c|}{92 (98)} &\multicolumn{2}{c }{82 (93)}\\
     RECI & \multicolumn{2}{c|}{19 (6)}& \multicolumn{2}{c|}{32 (18)}& \multicolumn{2}{c|}{28 (14)}& \multicolumn{2}{c|}{43 (43)} &\multicolumn{2}{c }{23 (7)}\\
     CGCI & \multicolumn{2}{c|}{100 (100)}& \multicolumn{2}{c|}{ 95 (99)}& \multicolumn{2}{c|}{ 100 (100)}& \multicolumn{2}{c|}{85 (94)} &\multicolumn{2}{c }{95 (99)}\\
     \bottomrule
\end{tabular}}
\vspace{-8pt}
\end{table*}

\subsection{Sample Size Experiments with Known Score}

Here, we report full tables of results corresponding to \cref{fig:gof_gt} and \cref{fig:score_mse_trace} in the main text, which is \cref{tab:sample_size_anm}. We also repeated the same experiment for a well-specified LSNM, which is reported in \cref{tab:sample_size_lsnm}. Finally, \cref{fig:LSNM_sample_size} shows the visual effects of increasing the sample size, see \cref{fig:gt_score_estim} for its counterpart when using the ground truth score. Note score estimation here refers to using the Stein score estimator with the Gaussian kernel.

\label{appx:experiments:sample_size}

\begin{table}[h!]
    \centering
    \caption{Known score sample size experiment results for ANM data. For MSE/GoF, values shown following the convention computed over the 100 datasets in the benchmark: Median (Q1, Q3).}
     \resizebox{\textwidth}{!}{
    \begin{tabular}{c|c|c|c|c|c|c}
        \toprule
        $n$ & Success (AUDRC)  & MSE (Cause)   & MSE (Effect)  & MSE (Joint)  & GoF (Causal)  & GoF (Anticausal) \\ \midrule
        100 & 55 (58) & 0.26 (0.17, 0.36) & 0.36 (0.18, 0.39) &  2.66 (1.46, 4.49) & 0.39 (0.35, 0.46) & 0.42 (0.36, 0.51) \\
        500 & 61 (75) & 0.08 (0.06, 0.12)  & 0.09 (0.06, 0.14)  & 2.27 (1.05, 4.10) & 0.29 (0.24, 0.32) & 0.31 (0.27, 0.36) \\
        1000 & 74 (88) & 0.06 (0.04, 0.08)  & 0.05 (0.04, 0.08)  & 2.17 (0.99, 4.26)  & 0.23 (0.19, 0.27) & 0.26 (0.23, 0.30) \\
        2500 & 83 (96)& 0.03 (0.01, 0.04)  & 0.03 (0.02, 0.04)  & 2.07 (0.94, 4.20)  & 0.18 (0.15, 0.20)  & 0.22 (0.19, 0.28) \\
        5000 & 86 (97)& 0.02 (0.01, 0.03)  & 0.02 (0.01, 0.03)  & 2.06 (0.92, 4.13) & 0.14 (0.12, 0.16) & 0.19 (0.16, 0.23) \\
        10000 & 91 (96)& 0.02 (0.01, 0.04)  & 0.02 (0.01, 0.05) & 2.03 (0.90, 4.21) & 0.12 (0.11, 0.14) & 0.19 (0.16, 0.23)\\ 
        \bottomrule
    \end{tabular}}
    \label{tab:sample_size_anm}

\end{table}

\begin{table}[h!]
    \centering
    \caption{Sample size experiment results for LSNM data. For MSE/GoF, values shown following the convention computed over the 100 datasets in the benchmark: Median (Q1, Q3).}
     \resizebox{\textwidth}{!}{
    \begin{tabular}{c|c|c|c|c|c|c}
        \toprule
        $n$ & Success (AUDRC)  & MSE (Cause)  & MSE (Effect)  & MSE (Joint)  & GoF (Causal)  & GoF (Anticausal) \\
        \midrule 
        100 & 59 (51) & 0.25 (0.16, 0.43) & 0.29 (0.16, 0.43) & 1.28 (0.85, 1.92) & 0.65 (0.46, 0.87) & 0.67 (0.50, 0.83) \\
        500 & 53 (65) & 0.09 (0.05, 0.13) & 0.09 (0.06, 0.14)  & 0.85 (0.65, 1.17) & 0.31 (0.27, 0.37) & 0.33 (0.28, 0.40)\\
        1000 & 70 (78) & 0.05 (0.03, 0.07) & 0.05 (0.03, 0.08)  & 0.72 (0.59, 1.13)  & 0.25 (0.22, 0.29)  & 0.27 (0.24, 0.31) \\
        2500 & 67 (81) & 0.03 (0.02, 0.03)  & 0.03 (0.02, 0.04)  & 0.64 (0.49, 1.05)  & 0.20 (0.17, 0.22) & 0.21 (0.18, 0.25)\\
        5000 & 73 (87) & 0.02 (0.01, 0.02) & 0.02 (0.01, 0.03) & 0.61 (0.47, 1.08) & 0.16 (0.14, 0.18) & 0.18 (0.16, 0.21)\\
        10000 & 80 (91) & 0.02 (0.01, 0.03)  & 0.02 (0.01, 0.04) & 0.58 (0.45, 1.01)  & 0.14 (0.13, 0.17) & 0.17 (0.14, 0.21) \\
        \bottomrule
        \end{tabular}}
    \label{tab:sample_size_lsnm}
\end{table}

\begin{figure*}[h!]
    \centering
    \includegraphics[width = \linewidth]{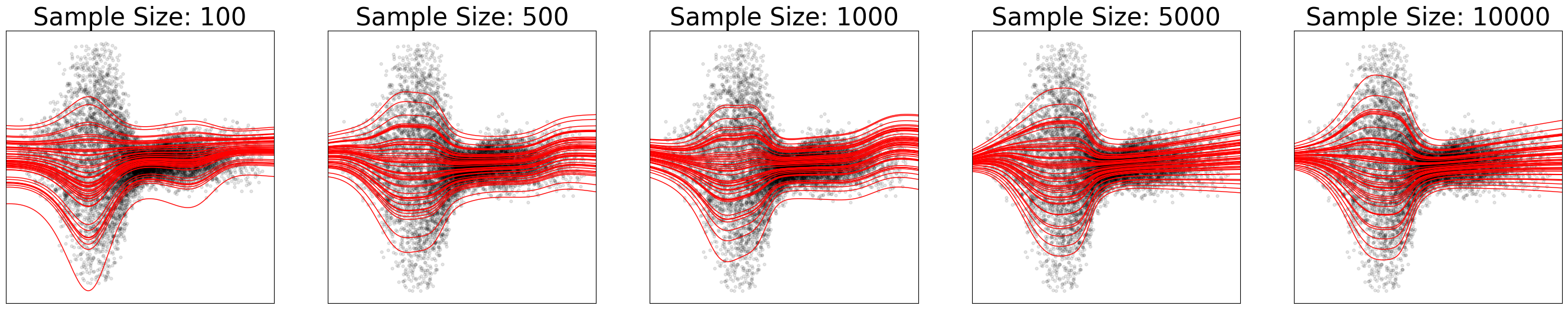}
    \vspace{-20pt}
    \caption{The estimated causal curves better resemble the ground truth as the score estimation improves in a well-specified LSNM.}
    \label{fig:LSNM_sample_size}
\end{figure*}

\end{document}